\documentclass[11pt]{article}

\usepackage[final]{acl}

\usepackage{times}
\usepackage{latexsym}
\usepackage[T1]{fontenc}
\usepackage[utf8]{inputenc}
\usepackage{microtype}
\usepackage{inconsolata}
\usepackage{booktabs}
\usepackage{amsfonts}
\usepackage{nicefrac}
\usepackage{xcolor}
\usepackage{amsmath}
\usepackage{amssymb}
\usepackage{amsthm}
\usepackage{mathtools}
\usepackage{graphicx}
\graphicspath{{./}}
\usepackage{url}
\usepackage{subcaption}
\usepackage{wrapfig}
\usepackage{float}
\usepackage{placeins}
\floatstyle{ruled}
\newfloat{algorithm}{tbp}{loa}
\floatname{algorithm}{Algorithm}
\usepackage{multirow}
\usepackage{makecell}
\usepackage{array}
\usepackage{tabularx}
\usepackage{colortbl}
\usepackage{tikz}
\usepackage{pgfplots}
\pgfplotsset{compat=1.17}
\usetikzlibrary{shapes,arrows,positioning,fit,backgrounds,calc}
\usepgfplotslibrary{fillbetween}
\usepackage{listings}
\usepackage[skins,listings]{tcolorbox}
\usepackage{pifont}
\newcommand{\cmark}{\ding{51}}
\newcommand{\xmark}{\ding{55}}
\newtheorem{assumption}{Assumption}[section]
\newtheorem{theorem}{Theorem}[section]
\newtheorem{proposition}{Proposition}[section]
\newtheorem{corollary}{Corollary}[section]

\definecolor{langblue}{RGB}{31,119,180}
\definecolor{langorange}{RGB}{255,127,14}
\definecolor{langgreen}{RGB}{44,160,44}
\definecolor{langgray}{RGB}{127,127,127}
\definecolor{highlightrow}{RGB}{235,245,255}
\definecolor{warnrow}{RGB}{255,245,220}
\definecolor{codekeyword}{RGB}{29,78,216}
\definecolor{codestring}{RGB}{4,120,87}
\definecolor{codecomment}{RGB}{100,116,139}
\definecolor{codeidentifier}{RGB}{190,24,93}
\definecolor{codefunction}{RGB}{147,51,234}
\definecolor{codenumber}{RGB}{234,88,12}
\definecolor{codeoperator}{RGB}{220,38,38}
\definecolor{codepunct}{RGB}{37,99,235}
\lstdefinelanguage{Clojure}{
  morekeywords={defn,cond,group-by,first},
  alsoletter={:,-,?},
  morekeywords=[2]{:else},
  sensitive=true,
  morecomment=[l]{;},
  morestring=[b]"
}
\lstdefinelanguage{Rust}{
  morekeywords={fn,let,mut,for,in,return,Vec,HashMap},
  morekeywords=[2]{entry,chars,next,unwrap,or_insert,push,new,vec},
  sensitive=true,
  morecomment=[l]{//},
  morecomment=[s]{/*}{*/},
  morestring=[b]"
}
\lstdefinestyle{langselectlisting}{
  basicstyle=\ttfamily\normalsize,
  keywordstyle=\bfseries\color{codekeyword},
  keywordstyle=[2]\bfseries\color{codefunction},
  stringstyle=\color{codestring},
  commentstyle=\itshape\color{codecomment},
  identifierstyle=\color{black!88},
  numberstyle=\normalsize\color{codenumber},
  showstringspaces=false,
  columns=fullflexible,
  keepspaces=true,
  breaklines=true,
  literate=
    {*}{{{\color{codeoperator}*}}}1
    {+}{{{\color{codeoperator}+}}}1
    {-}{{{\color{codeoperator}-}}}1
    {=}{{{\color{codeoperator}=}}}1
    {<}{{{\color{codeoperator}<}}}1
    {>}{{{\color{codeoperator}>}}}1
    {:}{{{\color{codepunct}:}}}1
    {;}{{{\color{codepunct};}}}1
    {(}{{{\color{codepunct}(}}}1
    {)}{{{\color{codepunct})}}}1
    {[}{{{\color{codepunct}[}}}1
    {]}{{{\color{codepunct}]}}}1
    {\{}{{{\color{codepunct}\{}}}1
    {\}}{{{\color{codepunct}\}}}}1
    {0}{{{\color{codenumber}0}}}1
    {1}{{{\color{codenumber}1}}}1
    {2}{{{\color{codenumber}2}}}1,
  aboveskip=0pt,
  belowskip=0pt
}
\newtcblisting{langselectcodebox}[3][]{%
  listing only,
  listing engine=listings,
  listing options={style=langselectlisting,language=#2},
  enhanced,
  width=\linewidth,
  colback=#3!16!white,
  colframe=#3,
  boxrule=1.15pt,
  arc=3.0pt,
  left=4pt,
  right=4pt,
  top=3pt,
  bottom=3pt,
  boxsep=1.6pt,
  before skip=0pt,
  after skip=4pt,
  fonttitle=\bfseries\normalsize,
  coltitle=white,
  title filled,
  colbacktitle=#3!92!black,
  coltitle=white,
  drop shadow={black!18!white},
  title={#1}
}
\lstdefinestyle{langselectlistingfigone}{
  style=langselectlisting,
  basicstyle=\ttfamily\small,
  numberstyle=\small\color{codenumber}
}
\newtcblisting{langselectcodeboxfigone}[3][]{%
  listing only,
  listing engine=listings,
  listing options={style=langselectlistingfigone,language=#2},
  enhanced,
  width=\linewidth,
  colback=#3!16!white,
  colframe=#3,
  boxrule=1.15pt,
  arc=3.0pt,
  left=4pt,
  right=4pt,
  top=3pt,
  bottom=3pt,
  boxsep=1.6pt,
  before skip=0pt,
  after skip=4pt,
  fonttitle=\bfseries\small,
  coltitle=white,
  title filled,
  colbacktitle=#3!92!black,
  coltitle=white,
  drop shadow={black!18!white},
  title={#1}
}

\title{LangSelect: Cost-Aware Target-Language Routing for LLM Code Generation\thanks{Anonymous artifacts: \url{https://anonymous.4open.science/r/LangSelect-4BEB/README.md}.}}

\author{
  Son Ha Xuan\textsuperscript{1,$*$} \quad
  Phat T. Tran-Truong\textsuperscript{2,$*$} \quad
  Xuan-Bach Le\textsuperscript{2,$\dagger$} \quad
  Nghia Duong-Trung\textsuperscript{3} \\[3pt]
  \textsuperscript{1}RMIT University, Ho Chi Minh City, Vietnam \\
  \textsuperscript{2}Faculty of Computer Science and Engineering, Ho Chi Minh City University of Technology (HCMUT), \\
  VNU-HCM, Ho Chi Minh City, Vietnam \\
  \textsuperscript{3}German Research Center for Artificial Intelligence (DFKI), Berlin, Germany \\[3pt]
  \texttt{ha.son@rmit.edu.vn} \quad \texttt{\{phatttt, lexuanbach\}@hcmut.edu.vn} \quad \texttt{nghia\_trung.duong@dfki.de} \\[3pt]
  \textsuperscript{$*$}Equal contribution. \quad \textsuperscript{$\dagger$}Corresponding author.
}

\begin{document}

\maketitle

\begin{abstract}
LLM code-generation systems usually choose a target programming language before decoding and treat that choice as fixed. We show that, for language-flexible programming tasks---tasks where several target languages are acceptable and checkable by the same tests---this choice is a measurable cost lever: verified implementations of the same task can differ substantially in generated-token length. We introduce LangSelect, a verification-aware router that selects the target language before generation and falls back when the first attempt fails. To separate offline routing opportunity from end-to-end behavior, we evaluate verified-solution replay, which chooses among already accepted corpus solutions, and live GPT-5 generation, which charges every generation attempt, including failures and fallbacks. On MultiLang-Bench, a 3,000-task, 8-language verified corpus, replay shows substantial language-routing headroom. In live evaluation on 450 held-out tasks, a train-split Domain heuristic baseline reduces harness-proxy tokens, which include wrapper and entrypoint overhead, by 50.3\% at 92.9\% pass after fallback, while a learned CodeBERT+metadata selector reaches the highest pass after fallback, 93.8\%, with a 3.7\% token increase. These results show that output-language routing can define a practical cost--correctness frontier for unit-test-verifiable code generation.
\end{abstract}

\section{Introduction}
\label{sec:intro}

Modern code-generation systems increasingly operate as iterative agents: they inspect context, synthesize code, run tests, diagnose failures, and retry. SWE-bench~\cite{jimenez2023swebench} and SWE-agent~\cite{yang2024sweagent} make this loop explicit. Because each failed attempt adds generated-code tokens, cost depends on the retry trace, not only the final program. Existing efficiency work changes prompts~\cite{jiang2023llmlingua}, model cascades~\cite{chen2023frugalgpt}, or decoding~\cite{xu2025cod}, but usually fixes one pre-decoding choice: the requested output language.

For language-flexible tasks---where several target languages are acceptable and the result can be checked by the same unit tests---we ask which programming language the model should generate. Valid implementations differ in syntax, libraries, and concision~\cite{nanz2015comparative}, while tokenizers add uneven segmentation~\cite{sennrich2016bpe,kudo2018sentencepiece,bostrom2020bpe}. We therefore treat target language as inference-time routing. Task structure informs the first language choice, while verification exposes pass rate, fallback behavior, and programming-language confusion, where outputs drift toward an off-target language, often Python~\cite{moumoula2025plc}.

\begin{figure*}[t!]
  \centering
  \definecolor{fibclojure}{HTML}{2A9D8F}
  \definecolor{fibjulia}{HTML}{48CAE4}
  \definecolor{fibruby}{HTML}{F77F00}
  \definecolor{fibpython}{HTML}{2F6FAE}
  \definecolor{fibjava}{HTML}{7209B7}
  \definecolor{fibgo}{HTML}{D62828}
  \definecolor{fibrust}{HTML}{9D0208}
  \definecolor{fibc}{HTML}{495057}
  \begin{minipage}[t]{0.55\textwidth}
      \vspace{0pt}
      \centering
      \begin{tikzpicture}
        \begin{axis}[
          width=0.828\linewidth,
          height=6.57cm,
          xmin=0,
          xmax=70,
          xlabel={GPT-5 (\texttt{cl100k\_base}) tokens},
          ymin=-0.75,
          ymax=7.75,
          y dir=reverse,
          ytick={0,1,2,3,4,5,6,7},
          yticklabels={Clojure, Ruby, Python, Julia, Go, C, Java, Rust},
          yticklabel style={font=\small},
          xticklabel style={font=\small},
          xlabel style={font=\small},
          xtick={0,10,20,30,40,50,60,70},
          axis lines=left,
          axis line style={black, -},
          clip=false,
          xmajorgrids=false,
          ymajorgrids=false,
        ]
          \draw[fill=fibclojure, draw=none] (axis cs:0,-0.36) rectangle (axis cs:12,0.36);
          \draw[fill=fibruby, draw=none]    (axis cs:0,0.64) rectangle (axis cs:18,1.36);
          \draw[fill=fibpython, draw=none]  (axis cs:0,1.64) rectangle (axis cs:35,2.36);
          \draw[fill=fibjulia, draw=none]   (axis cs:0,2.64) rectangle (axis cs:44,3.36);
          \draw[fill=fibgo, draw=none]      (axis cs:0,3.64) rectangle (axis cs:47,4.36);
          \draw[fill=fibc, draw=none]       (axis cs:0,4.64) rectangle (axis cs:51,5.36);
          \draw[fill=fibjava, draw=none]    (axis cs:0,5.64) rectangle (axis cs:52,6.36);
          \draw[fill=fibrust, draw=none]    (axis cs:0,6.64) rectangle (axis cs:60,7.36);

          \node[font=\small, text=white, anchor=west, align=left] at (axis cs:3,0) {Best};
          \node[font=\small, text=white, anchor=west, align=left] at (axis cs:3,2) {Default};
          \node[font=\small, text=white, anchor=west, align=left] at (axis cs:3,7) {Worst};
          \node[font=\small, anchor=west] at (axis cs:13.5,0) {12};
          \node[font=\small, anchor=west] at (axis cs:19.5,1) {18};
          \node[font=\small, anchor=west] at (axis cs:36.5,2) {35};
          \node[font=\small, anchor=west] at (axis cs:45.5,3) {44};
          \node[font=\small, anchor=west] at (axis cs:48.5,4) {47};
          \node[font=\small, anchor=west] at (axis cs:52.5,5) {51};
          \node[font=\small, anchor=west] at (axis cs:53.5,6) {52};
          \node[font=\small, anchor=west] at (axis cs:61.5,7) {60};
        \end{axis}
      \end{tikzpicture}
      \par\vspace{4pt}
      \begin{langselectcodeboxfigone}[Clojure -- 12 tokens (\cmark\ Most Efficient)]{Clojure}{fibclojure}
(defn f [xs] (group-by first xs))
      \end{langselectcodeboxfigone}
  \end{minipage}\hfill%
  \begin{minipage}[t]{0.44\textwidth}
      \vspace{0pt}
      \begin{langselectcodeboxfigone}[Python -- 35 tokens (LLM Default)]{Python}{fibpython}
from collections import defaultdict

def f(xs):
    d = defaultdict(list)
    for x in xs:
        d[x[0]].append(x)
    return dict(d)
      \end{langselectcodeboxfigone}
      \begin{langselectcodeboxfigone}[Rust -- 60 tokens (\xmark\ Least Efficient)]{Rust}{fibrust}
fn f(xs: Vec<&str>) -> HashMap<char, Vec<&str>> {
    let mut m = HashMap::new();
    for x in xs {
        m.entry(x.chars().next().unwrap()).or_insert(vec![]).push(x);
    }
    m
}
      \end{langselectcodeboxfigone}
  \end{minipage}
  \caption{\small A high-spread task in the 8-language portfolio. For this task, output-language choice creates a $5\times$ generated-code token spread, compared with a dataset-wide mean of $2.19\times$ (95\% CI $[2.17,2.20]$). Tokens are counted with GPT-5's \texttt{tiktoken cl100k\_base} encoder; default denotes Python-only, and efficient denotes the shortest tokenized accepted function body.}
  \label{fig:token_variation}
\end{figure*}

Figure~\ref{fig:token_variation} illustrates the opportunity: accepted solutions to one group-by task range from $12$ tokens in Clojure to $60$ in Rust. In the $3{,}000$-task, $8$-language MultiLang-Bench corpus, cheapest and most expensive accepted implementations differ by $2.19\times$ on average (95\% CI $[2.17,2.20]$). The example exposes the lever; the evaluation asks whether a selector can recover that token-spread opportunity at scale.

We instantiate this idea in \textbf{LangSelect}. At a high level, LangSelect ranks target languages from pre-generation task features, prompts generation in the top language, verifies the result, and retries on failure. The selector combines task-family priors, CodeBERT embeddings and metadata, uncertainty-aware language heads with LinUCB-style contextual bandits~\cite{li2010linucb,zhou2020neural}, and verification-gated fallback. We separate \emph{verified-solution replay headroom}, an audit that chooses among already accepted corpus solutions, from \emph{live targeted generation}, which samples fresh GPT-5 completions in the selected language and charges every generation attempt, including failures and fallbacks.

The replay and live numbers answer different questions. Under function-body accounting, replay reduces cost by $28.3\%$--$34.2\%$ on HumanEval-X, MultiPL-E, and McEval, and by $42.1\%$ on MultiLang-Bench; under the more conservative harness-proxy boundary, which includes wrapper and entrypoint overhead, the same MultiLang-Bench oracle gives $19.09\%$. In live evaluation on $450$ held-out GPT-5 tasks, the Domain heuristic baseline, a static training-split domain rule, achieves $+50.3\%$ harness-proxy savings at $92.9\%$ pass after fallback, close to the non-deployable Hindsight oracle row ($+49.6\%$ / $92.7\%$). CodeBERT+metadata, a learned selector using CodeBERT task embeddings plus metadata, reaches the best pass after fallback ($93.8\%$) while incurring a $3.7\%$ harness-proxy token increase. Both policies add under $1\%$ latency; a mainstream-only portfolio (Python, Java, Go, Ruby) still retains $15.15\%$ replay headroom.

We make four contributions:
\begin{enumerate}
  \item We formulate target-language routing under a verifier with an explicit token-cost and pass-rate boundary.
  \item We introduce MultiLang-Bench, a verified $3{,}000$-task, $8$-language corpus with deterministic splits and token, character, and byte accounting checks.
  \item We present LangSelect as a selector framework compatible with heuristic, supervised, and bandit policies, with priors, contextual correction, uncertainty-aware heads, verification-gated fallback, and deployment constraints.
  \item We evaluate replay and $450$ live GPT-5 tasks with baselines, fallback behavior, latency, accounting sensitivity, and portfolio constraints.
\end{enumerate}

LangSelect complements model-cascade routing~\cite{dohan2022cascades,ong2024routellm}: cascades vary the model, while LangSelect varies the target language. It also differs from multilingual code benchmarks~\cite{chen2021codex,austin2021mbpp,cassano2023multipl}, which ask whether models solve tasks across languages; we ask which acceptable language to attempt first.

\section{Motivating Evidence}
\label{sec:variation}

Our MultiLang-Bench corpus shows that this routing opportunity extends well beyond a single illustrative example. Across $3{,}000$ tasks and $8$ languages, the cheapest and most expensive unit-test-verified function bodies differ by $2.19\times$ on average under the \texttt{cl100k\_base} encoder (95\% CI $[2.17,2.20]$). Character- and byte-level audits show similar variation, indicating that the effect is not merely an artifact of tokenization.

The spread is workload-dependent. Appendix Figure~\ref{fig:token_heatmap} breaks the corpus into six $300$-task families: row-wise median spreads range from $1.33\times$ to $1.50\times$, and the cheapest language varies by family, often between Go and Python. Measured cheapest-language headroom is largest for math, string, I/O-converted tasks (input/output benchmark tasks converted into pure functions), and HumanEval/MBPP/APPS-style tasks, and smallest for numeric tasks (Section~\ref{sec:results}). This pattern has two implications. First, language choice is task-dependent rather than globally fixed, which motivates contextual routing. Second, the attainable savings depend on workload mix: numeric-heavy deployments should expect smaller gains than workloads dominated by string processing or I/O-style tasks. We revisit this stratification in the live results.

\section{Problem Formulation}
\label{sec:problem}

We formalize target-language routing to make the comparison boundary explicit: the fixed generator, allowed languages, verifier, and token-count rule used for a comparison. Given a fixed language portfolio $\mathcal{L}=\{\ell_1,\ldots,\ell_K\}$, each task $t$ consists of a specification $s_t$ and a test suite $\mathcal{U}_t$. For a generator $\mathcal{G}$, let $c(s_t,\ell)$ denote the expected number of generated tokens when $\mathcal{G}$ is prompted to produce language $\ell$ under the stated accounting rule, and let $q(s_t,\ell)$ denote strict pass@1: the first generated attempt passes all tests and language-fidelity checks under the verifier. A routing policy $\pi:\mathcal{S}\to\mathcal{L}$ then selects the first language to attempt, with the goal of reducing generated-code cost while maintaining the empirical pass boundary used throughout the paper. We use Python-only as the pass-rate reference because Python is what the model defaults to when no language is requested. Let $q_{\mathrm{py}}$ denote this Python-only pass rate and $\epsilon$ the allowed shortfall; we ask the policy to minimize expected cost subject to a Python-relative pass constraint:
\begin{equation}
  \begin{aligned}
  \min_{\pi}\quad & \mathbb{E}_{t}\bigl[c(s_t, \pi(s_t))\bigr] \\
  \text{s.t.}\quad &
  \mathbb{E}_{t}\bigl[q(s_t, \pi(s_t))\bigr]\geq q_{\mathrm{py}}-\epsilon .
  \end{aligned}
  \label{eq:objective}
\end{equation}
The two evaluation protocols differ in how they treat $\epsilon$. In verified-solution replay, $\epsilon=0$ by construction because the policy chooses among already-passing corpus solutions and cannot lose pass rate. In live targeted generation, $\epsilon$ is observed rather than assumed: it is the pass-rate shortfall the policy incurs, and we report it alongside cost. The attainable cost/pass pairs over a fixed generator, portfolio, verifier, and token-count accounting boundary form the \emph{empirical verified language-routing frontier}. The hindsight oracle picks the cheapest passing language per task within the verified corpus, so we treat it as a within-corpus reference rather than a universal upper bound.

\paragraph{Fallback reward.}
The objective is defined at the task level, but the selector receives feedback at the attempt level because a failed first choice may still be recovered through fallback. For task $t$ and attempt $a$, the selector encodes the specification as $x_t=\phi(s_t)$ and selects a language $\ell_{t,a}$. The generator then returns a program $y_{t,a}$ with token count $c_{t,a}$, and the selector receives the reward
\begin{equation}
  r_{t,a} = \bigl(1-\tau(c_{t,a})\bigr) \cdot
    \mathbf{1}\!\bigl[\mathrm{pass}(y_{t,a},\,\mathcal{U}_{t})\bigr],
  \label{eq:reward}
\end{equation}
where $\tau(\cdot)$ maps token counts to $[0,1]$ using percentiles computed from the warm-start corpus. This reward gives higher credit to passing attempts that use fewer tokens, while assigning zero credit to failures. The selector is therefore encouraged to choose languages that are both inexpensive and likely to pass. When fallback succeeds, credit is assigned only to the language that passes; earlier failed attempts still contribute to the task's token cost. Appendix~\ref{app:method_details} gives the full equations, warm-start procedure, and regret discussion for the idealized LinUCB-style update model.

\section{LangSelect Framework}
\label{sec:framework}

LangSelect (Figure~\ref{fig:framework_overview}) implements Eq.~\eqref{eq:objective} as a verification-guided routing loop. For each task, it encodes the specification, scores the available languages, generates code with a language-conditioned prompt, verifies the result with unit tests and language-fidelity checks, and records the observed outcome for policies that update after verification. Verification remains inside the loop because prompt-level features alone cannot reliably anticipate programming-language confusion or language-specific failures. Executing the generated program therefore provides the most direct empirical signal of whether a routing choice was actually successful.

\begin{figure*}[!t]
  \centering
  \includegraphics[width=0.98\textwidth]{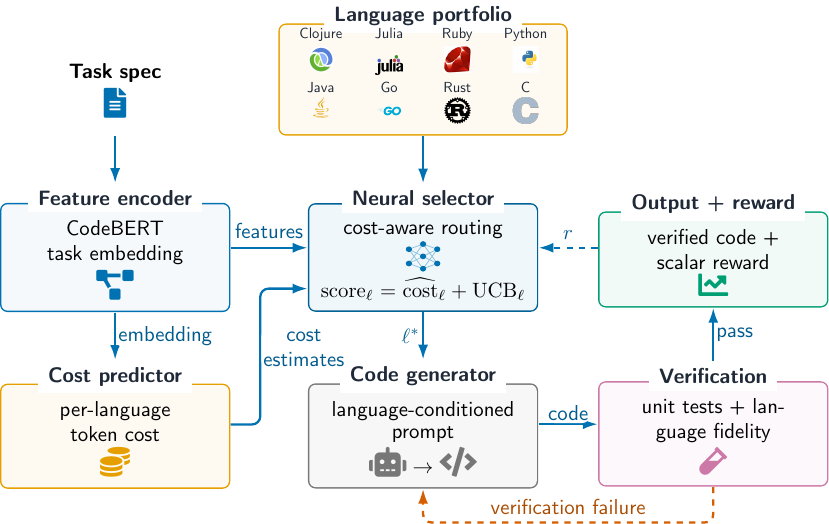}
  \caption{LangSelect as a verified output-language routing loop. The selector ranks target languages from task features, generation is checked by unit tests and language-fidelity checks, and verification failures trigger fallback attempts and reward updates.}
  \label{fig:framework_overview}
\end{figure*}

The selector moves from coarse signals to task-specific evidence. It first uses deployable metadata available before generation, such as domain, source group (for example, HumanEval/MBPP/APPS-style versus Codeforces/LeetCode-style), argument/return type family, and difficulty, to set a coarse prior; the Domain heuristic baseline is the simplest version of this layer and already provides a strong baseline. It then combines CodeBERT embeddings with structured metadata to refine the prior when the task representation points to a different language. Per-language Neural-LinUCB heads add uncertainty-aware selection, allowing the system to explore less-used languages while exploiting reliable ones. Finally, verification-gated fallback retries languages in descending uncertainty-aware score order and charges every failed attempt, making routing errors recoverable rather than terminal.

We use token-spread features, computed from already generated cross-language token counts, only for auditing and not for deployment; including them in the policy would conflate prediction with information observed after generation. If the first attempt fails, LangSelect tries the next-ranked language, for up to three total attempts, with Python reserved as the final fallback. Failed languages receive zero reward, and successful credit is assigned only to the language that passes verification. Appendix~\ref{app:method_details} gives the full equations and warm-start details.

\paragraph{Deployment-constraint scoring.}
Real deployments rarely allow an unconstrained language portfolio. Maintainability requirements, integration costs, and organizational policies often restrict which languages can be used. We therefore extend the selector score to support both hard portfolio masks and soft preference penalties:
\begin{equation}
\label{eq:deploy_score}
\begin{aligned}
\operatorname{score}(\ell) =&\;
  \underbrace{\hat\theta_\ell^{\top} x_t
  + \alpha_n \sqrt{x_t^{\top} A_\ell^{-1} x_t}}_{\text{UCB}(\ell)}\\
& + \lambda_q\, \hat{q}(s_t, \ell)
  - \lambda_m\, M(\ell)
  - \lambda_i\, I(t, \ell).
\end{aligned}
\end{equation}
Here $\hat\theta_\ell$ and $A_\ell$ are the per-language LinUCB reward estimate and design matrix, $\alpha_n$ controls exploration, and $\hat{q}(s_t,\ell)$ estimates pass probability. A hard mask restricts the policy to $\pi(s_t)\in A_t\subseteq\mathcal{L}$. If the highest-scoring language is not allowed, the selector chooses the best remaining language in~$A_t$. The optional quality term $\hat{q}(s_t,\ell)$ favors languages predicted to pass, while $M(\ell)\in[0,1]$ and $I(t,\ell)\in[0,1]$ penalize organization-level maintainability cost and task-specific integration cost supplied at deployment time. The weights $\lambda_q,\lambda_m,\lambda_i\geq 0$ can be set at deployment time; with no hard mask and these weights set to zero, the score recovers the unconstrained selector. Section~\ref{sec:res_constraints} quantifies the headroom lost under common deployment restrictions.

\section{Experimental Setup}
\label{sec:setup}

\paragraph{Corpus.}
We evaluate primarily on MultiLang-Bench, a harness-verified corpus of $3{,}000$ deterministic pure-function tasks. Each task has an accepted solution in eight languages: C, Clojure, Go, Java, Julia, Python, Ruby, and Rust. The portfolio covers low-level languages (C, Rust), mainstream JVM/Go languages (Java, Go), mainstream scripting languages (Python, Ruby), and concise functional or numeric languages (Clojure, Julia). The corpus contains $990$ HumanEval/MBPP/APPS-style tasks, $1{,}230$ Codeforces/LeetCode-style tasks, and $780$ manually curated or author-reviewed tasks. We use a fixed $2{,}100/450/450$ train/validation/test split, and all $24{,}000$ accepted solutions pass their local unit-test and language-fidelity harnesses. Appendix~\ref{app:bench} gives the construction details.

We use two evaluation protocols and keep their roles distinct. \emph{Verified-solution replay} selects among implementations that have already passed the harness, so strict pass@1 is $100\%$ by construction. This protocol isolates the routing decision and cost accounting from generation noise; its headroom is the cost reduction from choosing among accepted solutions relative to Python-only under a stated accounting boundary. \emph{Live targeted generation} instead samples fresh GPT-5 completions in the selected language, runs them through the local harness, charges every generation attempt including failures and fallbacks, and reports pass after fallback. Replay estimates verified-portfolio headroom, while live evaluation measures how much a deployable selector can recover.

All reported generations use GPT-5 through the OpenAI API. We count tokens with the \texttt{tiktoken cl100k\_base} encoder and report three accounting boundaries: function body, function plus signature, and harness proxy. The harness-proxy boundary also charges the per-language wrapper and entrypoint overhead, making it the closest of the three to a deployed inference path. We therefore use it as the headline boundary, although it yields the most conservative savings.

\paragraph{Baselines.}
Every deployable baseline returns an ordered list of target languages before generation, using only training/validation data and task fields available before generation. The rows share the same prompts, GPT-5 settings, verifier, token counter, and fallback budget; only the routing policy changes. Python-only fixes the first language to Python. Static heuristics rank languages by training-split verified cost within a source group or task domain: Source heuristic uses source group, while Domain heuristic uses task domain. Learned supervised selectors train on the same train/validation split to predict the cheapest verified language from CodeBERT embeddings, metadata, or both. The Hybrid prior+LinUCB policy initializes from the domain prior and lets the validation-tuned LinUCB score override it when another language receives a higher uncertainty-aware score. The Hindsight oracle row is not deployable: it uses the test task's verified corpus costs to choose the cheapest target language and serves only as a measured reference.

For replay, we compare Python-only, Domain heuristic, CodeBERT-only, CodeBERT+metadata, Hybrid prior+LinUCB, and Hindsight oracle. Live targeted generation uses four rows: Python-only, Domain heuristic, CodeBERT+metadata, and Hindsight oracle. Section~\ref{sec:results} also reports six supervised classifiers on the same deployable features as a supervised comparison set. Fallback permits at most three attempts and charges every failed, fallback, and final successful generated token. Appendix~\ref{app:baseline_construction} gives the construction details for each comparison object.

Here, ``verified'' refers to acceptance under our unit-test and language-fidelity harness. Replay therefore isolates routing over accepted corpus solutions, rather than over all possible programs. Accepted solutions may come from different construction paths, such as translation, curation, or author review, but all are admitted only after passing the full harness and language-fidelity checks. Hyperparameters, execution controls, and the reproducibility checklist appear in Appendix~\ref{app:repro}.

\section{Results}
\label{sec:results}

We report replay headroom, supervised baselines, live targeted generation, fallback behavior, sensitivity, and portfolio constraints. Replay measures the cost opportunity under fixed correctness by choosing among verified solutions; live generation measures how much of that opportunity remains after every fresh GPT-5 generation attempt, including failures and fallbacks, is charged.

\subsection{Replay headroom}
\label{sec:res_replay}

\begin{figure*}[t]
  \centering
  \begin{tikzpicture}
\begin{axis}[
  width=10.8cm,
  height=4.25cm,
  xbar,
  xmin=0,
  xmax=38,
  ymin=0.45,
  ymax=6.55,
  bar width=7pt,
  xlabel={Cost reduction vs.\ Python-only (\%)},
  ytick={1,2,3,4,5,6},
  yticklabels={Python-only,CodeBERT-only,Domain heuristic,CodeBERT+metadata,Hybrid prior+LinUCB,Hindsight oracle},
  yticklabel style={font=\normalsize, text=black!85},
  xtick={0,10,20,30},
  xticklabel style={font=\normalsize},
  xlabel style={font=\normalsize},
  xmajorgrids,
  grid style={black!10},
  axis x line*=bottom,
  axis y line*=left,
  axis line style={black!70},
  tick style={black!70},
  nodes near coords,
  nodes near coords align={horizontal},
  nodes near coords style={font=\normalsize, anchor=west, xshift=1.5pt, text=black!85},
  point meta=explicit symbolic,
  every axis plot/.append style={draw=none, bar shift=0pt},
  clip=false,
]
\addplot[fill=langgray!45] coordinates {(0.00,1) [0.0\%, 19.8k tok]};
\addplot[fill=langblue!65] coordinates {(18.86,2) [18.9\%, 16.0k tok]};
\addplot[fill=langorange!80] coordinates {(20.02,3) [20.0\%, 15.8k tok]};
\addplot[fill=langblue!90] coordinates {(27.54,4) [27.5\%, 14.3k tok]};
\addplot[fill=langgreen!75!langblue] coordinates {(27.63,5) [27.6\%, 14.3k tok]};
\addplot[fill=langgreen!80] coordinates {(31.78,6) [31.8\%, 13.5k tok]};
\end{axis}
\end{tikzpicture}

  \caption{Held-out verified-solution replay on the 450-task MultiLang-Bench test split (function-body accounting, GPT-5). Pass@1 is $100\%$ by construction; labels show cost reduction relative to Python-only.}
  \label{fig:main_results_plot}
\end{figure*}

Replay separates routing from generation noise. Each policy selects among verified solutions for the same $450$ held-out tasks, making strict pass@1 $100\%$ by construction; selector error appears only as extra cost.

\paragraph{External benchmarks.}
A similar token-cost opportunity appears beyond our corpus. On three independent multilingual benchmarks, the Hybrid prior+LinUCB policy reduces cost relative to Python-only by $28.3\%$ on HumanEval-X, $31.6\%$ on MultiPL-E, and $34.2\%$ on McEval under function-body accounting (all $p<0.001$ after Bonferroni correction; Appendix~\ref{app:supp_analysis}). These benchmarks differ in task source and language coverage, so the repeated effect suggests that generated-code cost is not only an artifact of MultiLang-Bench construction.

\paragraph{MultiLang-Bench.}
On MultiLang-Bench, the Hindsight oracle row achieves a $42.1\%$ function-body reduction (95\% CI $[39.8, 44.4]$).\footnote{Under harness-proxy accounting, which includes wrappers, signatures, and harness overhead, the same oracle gives $19.09\%$ (Table~\ref{tab:constrained_portfolio}); the live headline (\S\ref{sec:res_live}) uses this conservative boundary.} The Hybrid prior+LinUCB policy recovers most of the deployable replay headroom shown in Figure~\ref{fig:main_results_plot}. Gains are workload-sensitive: math and string tasks exceed $41\%$, I/O-converted tasks reach $39.4\%$, and numeric tasks reach $8.8\%$ (Appendix Table~\ref{tab:headroom_partitions}). Thus, replay quantifies the cost opportunity in the verified distribution, while live generation tests how much remains after every generated attempt is charged.

\subsection{Supervised baselines}
\label{sec:res_ablation}

We train six supervised baselines on CodeBERT embeddings and task metadata, reporting Wilson 95\% CIs. Random forest performs best, with $+5.83\%$ cost reduction and $61.5\%$ oracle recovery, the fraction of tasks whose first selected language matches the hindsight-optimal language (CI $[45.9, 75.1]\%$). Supervised routing is feasible on these features, but a wrong first language in replay is charged as extra cost rather than retried. LangSelect uses the same signals with uncertainty-aware exploration and verification-gated fallback, so selector mistakes can become second attempts rather than silent cost regressions; Appendix Table~\ref{tab:supervised_baselines} gives the full results.

\subsection{Live targeted generation}
\label{sec:res_live}

\begin{table*}[t]
  \centering
  \caption{Live targeted generation on the 450-task held-out split (GPT-5). Each policy generates in its chosen language and falls back on failure (up to 3 attempts; all attempts charged). Pass@1: first-language success; FB rate: fraction of tasks triggering fallback; Pass-FB: pass after fallback; SelLat: selector latency.}
  \label{tab:live_eval_main}
  \normalsize
  \setlength{\tabcolsep}{4pt}
  \renewcommand{\arraystretch}{1.12}
  \begin{tabular*}{\textwidth}{@{\extracolsep{\fill}}lrrrrr@{}}
    \toprule
    \textbf{Policy} & \textbf{Pass@1} & \textbf{FB rate} & \textbf{Pass-FB} & \textbf{Attempts} & \textbf{SelLat (ms)} \\
    \midrule
    Python-only       & 92.0\% &  0.0\% & 92.0\% & 1.00 & 0.01 \\
    Domain heuristic  & 87.3\% & 12.4\% & \textbf{92.9\%} & 1.20 & 0.01 \\
    CodeBERT+metadata & 91.8\% &  8.0\% & \textbf{93.8\%} & 1.14 & 0.94 \\
    Hindsight oracle  & 87.1\% & 12.7\% & 92.7\% & 1.20 & 0.01 \\
    \bottomrule
  \end{tabular*}
\end{table*}

\begin{table*}[t]
  \centering
  \caption{Net cost per task and reduction vs.\ Python-only under three accounting boundaries; all attempts (including failed and fallback) are charged. Harness-proxy is the headline boundary.}
  \label{tab:live_eval_cost}
  \normalsize
  \setlength{\tabcolsep}{4pt}
  \renewcommand{\arraystretch}{1.14}
  \begin{tabular*}{\textwidth}{@{\extracolsep{\fill}}lrrrrrr@{}}
    \toprule
    & \multicolumn{2}{c}{\textbf{Function body}} & \multicolumn{2}{c}{\textbf{Function + signature}} & \multicolumn{2}{c}{\textbf{Harness proxy}} \\
    \cmidrule(lr){2-3}\cmidrule(lr){4-5}\cmidrule(lr){6-7}
    \textbf{Policy} &
      \textbf{tok/task} & \textbf{Reduction} &
      \textbf{tok/task} & \textbf{Reduction} &
      \textbf{tok/task} & \textbf{Reduction} \\
    \midrule
    Python-only       & 130.7 &  +0.0\% & 139.5 &  +0.0\% & 141.4 &  +0.0\% \\
    Domain heuristic  &  47.6 & \textbf{+63.6\%} &  70.2 & \textbf{+49.7\%} &  70.3 & \textbf{+50.3\%} \\
    CodeBERT+metadata & 132.7 & $-1.5\%$ & 145.0 & $-3.9\%$ & 146.7 & $-3.7\%$ \\
    Hindsight oracle  &  47.9 & +63.4\% &  70.8 & +49.2\% &  71.2 & +49.6\% \\
    \bottomrule
  \end{tabular*}
\end{table*}

Tables~\ref{tab:live_eval_main} and~\ref{tab:live_eval_cost} report live GPT-5 evaluation on all $450$ held-out tasks. Under harness-proxy accounting, the Domain heuristic baseline achieves $+50.3\%$ cost reduction at $92.9\%$ pass after fallback, close to the non-deployable Hindsight oracle row ($+49.6\%$ / $92.7\%$). Because the Domain heuristic baseline is a static train-split rule, stable workloads can recover the low-cost operating point without online learning infrastructure.

The live $+50.3\%$ reduction exceeds the $19.09\%$ harness-proxy replay number because the denominators differ. Replay compares verified corpus solutions against the verified Python corpus solution for each task. Live evaluation instead compares all charged GPT-5 attempts against fresh GPT-5 Python generations, often with boilerplate, comments, and imports, all of which are charged. The live number is therefore the proper like-for-like comparison for the end-to-end protocol.

CodeBERT+metadata occupies the opposite operating point on the frontier. It reaches the highest pass after fallback ($93.8\%$) and incurs a $3.7\%$ harness-proxy token increase relative to Python-only for that gain. Figure~\ref{fig:cost_correctness_frontier} shows the resulting deployment choice: cost-critical settings favor the Domain heuristic baseline, while pass-rate-critical settings favor CodeBERT+metadata.

\begin{figure}[t]
  \centering
  \begin{tikzpicture}
    \begin{axis}[
      width=0.78\linewidth,
      height=5.8cm,
      xlabel={Harness-proxy tokens per task (lower is better)},
      ylabel={Pass after fallback (\%)},
      xmin=40,xmax=160,
      ymin=91.5,ymax=94.5,
      xtick={50,70,90,110,130,150},
      ytick={92,92.5,93,93.5,94},
      grid=both,
      grid style={dashed,gray!30},
      every axis label/.append style={font=\normalsize},
      tick label style={font=\normalsize},
      legend style={font=\normalsize,at={(0.98,0.02)},anchor=south east,draw=none,fill=white,fill opacity=0.85,text opacity=1},
      clip=true,
    ]
      \addplot[langorange, very thick, dashed] coordinates {(70.3,92.9) (146.7,93.8)};
      \addplot[only marks, mark=*, mark size=2.8pt, langgray]
        coordinates {(141.4,92.0)};
      \node[font=\normalsize, anchor=east, fill=white, fill opacity=0.92, text opacity=1, inner sep=1pt]
        at (axis cs:133.0,92.05) {Python-only};
      \addplot[only marks, mark=triangle*, mark size=3.2pt, langgreen]
        coordinates {(71.2,92.7)};
      \node[font=\normalsize, anchor=north west, fill=white, fill opacity=0.92, text opacity=1, inner sep=1pt]
        at (axis cs:44.0,92.60) {Hindsight oracle};
      \addplot[only marks, mark=square*, mark size=3.2pt, langblue]
        coordinates {(70.3,92.9)};
      \node[font=\normalsize\bfseries, anchor=south west, fill=white, fill opacity=0.92, text opacity=1, inner sep=1pt]
        at (axis cs:44.0,93.10) {Domain heuristic};
      \addplot[only marks, mark=diamond*, mark size=3.6pt, langblue]
        coordinates {(146.7,93.8)};
      \node[font=\normalsize\bfseries, anchor=north east, align=right, fill=white, fill opacity=0.92, text opacity=1, inner sep=1pt]
        at (axis cs:145.0,94.42) {CodeBERT+\\metadata};
    \end{axis}
  \end{tikzpicture}
\caption{Cost--correctness frontier on 450 live GPT-5 tasks. Domain heuristic is lowest-cost ($70$ tokens, $92.9\%$ pass after fallback); CodeBERT+metadata is highest-pass ($147$ tokens, $93.8\%$ pass after fallback).}
  \label{fig:cost_correctness_frontier}
\end{figure}\vspace{-1em}

\paragraph{Fallback behavior.}
Fallback turns selector errors into charged recovery attempts. CodeBERT+metadata gains the most from fallback ($+2.0$ pp) while invoking it least often ($8.0\%$ of tasks), suggesting both a stronger first choice and a useful second choice. The Domain heuristic baseline closely matches Hindsight oracle's fallback profile ($12.4\%$ vs. $12.7\%$ fallback rate; $1.20$ vs. $1.20$ mean attempts), which helps explain why its harness-proxy savings fall within $1$ pp of the measured reference.

\subsection{Sensitivity, latency, and cold start}
\label{sec:res_sensitivity}

\paragraph{Overhead, boundary, and cold start.}
Routing adds little overhead: $39.65$ ms mean / $50.63$ ms p95 on CPU, below $1.01\%$ of GPT-5 generation latency (Appendix Table~\ref{tab:selector_latency}); CodeBERT embedding accounts for $99.3\%$ and GPU execution cuts this by ${\approx}5\times$. The accounting boundary changes magnitude but not direction --- Domain heuristic gives $+63.6\%$ (function-body), $+49.7\%$ (function+signature), and $+50.3\%$ (harness-proxy) --- and five cost views and seven reward normalizations leave the selected language unchanged per task (Appendix Tables~\ref{tab:reward_sensitivity}--\ref{tab:reward_norm_sensitivity}, \ref{tab:warmstart_sensitivity}); we use harness-proxy as the headline boundary because it is conservative and yields a more diverse oracle. In cold start, Neural-LinUCB regret stabilizes after roughly $50$ tasks (Appendix Figure~\ref{fig:regret}), so new distributions can collect about $50$ verified observations before enabling the bandit; stable workloads use the Domain heuristic baseline directly, and fallback rate provides a practical drift signal.

\subsection{Deployment Constraints}
\label{sec:res_constraints}

\begin{table}[t]
  \centering
  \caption{Portfolio constraint sensitivity ($n{=}377$ replay, GPT-5, harness proxy). Dropping C/Rust costs $0.1$ pp; a mainstream portfolio retains $15$ pp hindsight headroom.}
  \label{tab:constrained_portfolio}
  \normalsize
  \setlength{\tabcolsep}{2pt}
  \begin{tabularx}{\linewidth}{@{}p{0.27\linewidth}Xr@{}}
    \toprule
    \textbf{Portfolio} & \textbf{Languages} & \makecell{\textbf{Hindsight}\\\textbf{headroom}} \\
    \midrule
    Full 8                  & C, Clojure, Go, Java, Julia, Python, Ruby, Rust & \textbf{19.09\%} \\
    No C/Rust & Clojure, Go, Java, Julia, Python, Ruby       & 19.00\% \\
    Mainstream              & Python, Java, Go, Ruby                          & 15.15\% \\
    \bottomrule
  \end{tabularx}
\end{table}\vspace{-0.5em}

Real deployments restrict the language portfolio for maintainability, integration, or policy reasons; Eq.~\ref{eq:deploy_score} adds hard masks (allowed set) and soft penalties (maintainability and integration costs) to the selector score. Table~\ref{tab:constrained_portfolio} shows the headroom retained: the full $8$-language portfolio gives $19.09\%$ harness-proxy headroom, removing C and Rust costs only $0.1$ pp, and a mainstream portfolio (Python, Java, Go, Ruby) still retains $15.15\%$. Soft penalties with $\lambda_m \in \{0.3, 1.0\}$ leave replay saving nearly unchanged because the oracle is mostly outside the penalized languages.

\section{Analysis}
\label{sec:analysis}

On this task distribution, the Domain heuristic baseline captures most of the live cost saving, showing that coarse domain structure already identifies many low-cost language choices; the bandit machinery mainly extends this rule for cold-start diagnostics rather than replacing the stable-domain rule in the live headline. Verification-gated fallback converts selector errors into recoverable attempts, and Eq.~\ref{eq:deploy_score} ranks languages under maintainability and integration penalties. Random forest recovers a measurable portion of the replay oracle but optimizes a single replay decision, while the deployable policy must also decide how much pass-rate risk to take on the first language and how many fallback attempts to reserve --- this is why Domain heuristic and CodeBERT+metadata occupy different useful points rather than one replacing the other. The cost-optimal language often reflects language-specific strengths (Ruby for string-heavy tasks, Clojure for math, C for systems-style code, Go or Java for structured tasks; Appendix Table~\ref{tab:heuristic_vs_langselect}). CodeBERT+metadata triggers fallback least often and receives the largest pass-rate gain from fallback, while the Domain heuristic baseline closely follows the Hindsight oracle fallback profile; routing overhead remains negligible (Appendix Table~\ref{tab:selector_latency}).

\section{Related Work}
\label{sec:related}

Cost-aware LLM inference typically changes the prompt or the model call --- prompt compression~\cite{jiang2023llmlingua}, terse reasoning~\cite{xu2025cod}, model routing~\cite{chen2023frugalgpt,ong2024routellm,dohan2022cascades}, or triage-style coding-agent routing~\cite{jimenez2023swebench,yang2024sweagent}. LangSelect is complementary: it holds the task and generator fixed and changes the requested implementation language, composing with prompt- and model-side methods. Code-generation benchmarks provide the measurement setting: HumanEval and MBPP established unit-test evaluation~\cite{chen2021codex,austin2021mbpp}, APPS, BigCodeBench, and DS-1000 broadened the task distribution~\cite{hendrycks2021apps,zhuo2024bigcodebench,lai2022ds1000}, and HumanEval-X, MultiPL-E, and McEval extended evaluation across programming languages~\cite{zheng2023codegeex,cassano2023multipl,chai2024mceval}; we ask which verified language to attempt first under a generated-token budget.

A static globally-shortest rule is insufficient. Code models generate across many languages~\cite{nijkamp2022codegen,li2023starcoder,roziere2023codellama} but show programming-language confusion~\cite{moumoula2025plc} and uneven tokenization~\cite{sennrich2016bpe,kudo2018sentencepiece,petrov2023tokenizer}, and concision reflects semantic and idiomatic differences~\cite{nanz2015comparative,bostrom2020bpe}. LangSelect treats language choice as a contextual decision checked by verification; its LinUCB variant builds on contextual bandits~\cite{li2010linucb,zhou2020neural}, and the main contribution is the output-language control axis and its measured cost boundary in test-verifiable coding workflows.

\section{Conclusion}
\label{sec:conclusion}

Output programming language can be a routing decision for function-level, unit-test-verifiable workloads. In live GPT-5 evaluation, the Domain heuristic baseline reaches $+50.3\%$ harness-proxy savings at $92.9\%$ pass after fallback (within $1$~pp of the Hindsight oracle), and CodeBERT+metadata offers a higher-pass point on the cost--pass Pareto frontier. The implication is not to replace Python, but to define the routing boundary: the allowed languages, verifier, fallback budget, and accounting rule determine whether a cheaper first language is worthwhile.

Replay headroom and live recoverable savings answer different questions --- replay isolates routing over verified solutions, live generation charges failed and fallback attempts --- and reporting both prevents a hindsight oracle from standing in for a deployable policy.

\section*{Limitations}
\label{sec:limitations}

\paragraph{Scope: function-level workloads.} Our four benchmarks (HumanEval-X, MultiPL-E, McEval, and MultiLang-Bench) target function-level, unit-test-verifiable code generation. The reported savings characterize this regime; repository-pinned edits, file-spanning refactors, and language-mixed pipelines lie outside it and require separate evaluation. MultiLang-Bench focuses on polyglot-solvable pure functions by design, which is what enables the controlled per-task replay protocol.

\paragraph{Generation model.} Live results use GPT-5 throughout. The cost-model invariance analysis shows that the saving is stable across five cost models and seven reward normalizations under GPT-5. Extension to additional frontier models (Claude, Gemini) and open-weight models is a direct follow-up; the framework is model-agnostic, but each model needs its own traces.

\paragraph{Portfolio policy as a deployment knob.} The framework handles common deployment policies through hard masks and soft penalties (Eq.~\ref{eq:deploy_score}). Table~\ref{tab:constrained_portfolio} shows that a mainstream-only portfolio (Python, Java, Go, Ruby) retains $15\%$ replay headroom, which leaves a meaningful cost lever for deployments that exclude niche languages. Practitioners still need to select a portfolio that matches their maintainability and policy constraints.

\paragraph{Accounting boundary and correctness.} We report savings under three accounting boundaries in parallel (function body, function+signature, harness proxy) and headline the most conservative one. Imports, build metadata, and downstream integration code are deliberately excluded to keep the routing decision identifiable. Correctness is verified empirically by tests and language-fidelity checks; these checks are useful guardrails, not semantic proofs.

\paragraph{Open directions in priority order.} (i) Multi-model live evaluation across Claude, Gemini, and open-weight models. (ii) Live evaluation under mainstream-only portfolios to complement the replay number in Table~\ref{tab:constrained_portfolio}. (iii) Cold-start live ablations on fresh distributions to characterize the bandit's incremental contribution under live conditions. (iv) Repository-level evaluation to extend scope beyond function-level tasks. Further directions: richer deployable type features, full-file accounting, latency-aware fallback, and joint composition with model routing.


\clearpage

\appendix
\raggedbottom
\section{Supplementary Experimental Protocol}
\label{app:supp_protocol}

The appendices provide the audit trail for the main-paper claims. Appendix~\ref{app:supp_protocol} records the evaluation protocol, baselines, and execution controls behind the compact main-text tables. Appendix~\ref{app:method_details} gives the selector architecture, warm-start procedure, fallback accounting, reward normalization, and theoretical assumptions used by LangSelect. Appendix~\ref{app:bench} documents MultiLang-Bench construction and its measurement boundary. Appendix~\ref{app:supp_analysis} expands the replay, boundary, and ablation results. Appendix~\ref{app:related_taxonomy} gives the related-work taxonomy, Appendix~\ref{app:extended} reports extended regret and feasibility diagnostics, and Appendix~\ref{app:repro} lists reproducibility details.

This section defines the comparison objects used throughout the evaluation. The protocol text below replaces the earlier audit tables with prose so that the appendix reads as an evidence trail: it states what was measured, what information each policy may use, and how attempts are charged. Figure~\ref{fig:main_results_plot} and the retained numeric tables in Appendix~\ref{app:supp_analysis} report the trace-backed results.

\paragraph{Protocol audit.}
\phantomsection\label{app:protocol_audit}
The evaluated corpus is the real-provenance verified MultiLang-Bench corpus: 3{,}000 tasks split into 2{,}100 train, 450 validation, and 450 test tasks. The source mix is 990 HumanEval/MBPP/APPS-style tasks, 1{,}230 Codeforces/LeetCode-style tasks, and 780 manual or author-reviewed tasks. Each task has one accepted solution in each of eight languages: C, Clojure, Go, Java, Julia, Python, Ruby, and Rust, giving 24{,}000 accepted solution records. Every retained solution passes the local harness, expected-entrypoint check, and language-fidelity check. The live and replay reports use GPT-5 traces from the OpenAI direct API, and all reported token counts use \texttt{tiktoken cl100k\_base}. Missing token counts are excluded from retained reports. Each task also carries a prompt policy, source reference or note, license field, and normalization note.

The main cost boundary is harness-proxy accounting: body tokens plus function signature, wrapper, and entrypoint overhead. Function-body and function+signature accounting are reported as sensitivity checks. The supplementary sensitivity audit also re-scores the traces under five cost-model variants (tokens, characters, bytes, lines, and API dollars) and seven reward normalizations. Those checks test whether the headline direction depends on a single tokenization or normalization choice.

\paragraph{Evidence scope and fallback.}
\phantomsection\label{app:evidence_scope_text}
Verified-solution replay chooses among accepted corpus solutions, so strict pass@1 is 100\% by construction and selector error appears only as extra generated-code cost. Live targeted generation is a stricter deployment proxy: GPT-5 generates fresh code in the selected language, and every failed attempt, fallback attempt, and final successful attempt is charged. Boundary checks re-score the same traces under alternative accounting choices. The Hindsight oracle row is a measured within-corpus reference, not a universal upper bound: it chooses the cheapest verified language after the test-task solution costs are known.

Fallback is policy-specific and visible in the live traces. Python-only has 92.0\% first-pass success and never falls back. Domain heuristic has 87.3\% first-pass success and triggers fallback on 12.4\% of tasks. CodeBERT+metadata has 91.8\% first-pass success and triggers fallback on 8.0\% of tasks. Hindsight oracle has 87.1\% first-pass success and triggers fallback on 12.7\% of tasks. These rates are the source of the fallback columns in Table~\ref{tab:live_eval_main}.

\paragraph{Baseline construction.}
\phantomsection\label{app:baseline_construction}
All deployable rows use only train/validation data and task fields available before generation. They share the same prompts, GPT-5 settings, verifier, token counter, and fallback budget; only the routing policy changes.

Python-only is the fixed-language baseline. In replay it selects the verified Python solution for every task; in live evaluation it prompts GPT-5 to generate Python and does not use a learned ranking. Source heuristic and Domain heuristic are coarse train-split rules: each ranks the eight languages by aggregate verified token cost within the corresponding source group or task domain, and a test task inherits that ranking before generation. The main replay and live tables focus on Domain heuristic; Source heuristic is defined here so the comparison boundary is explicit.

The learned selectors use deployable features rather than test-time cost labels. Metadata-only uses source group, domain, difficulty, and argument/return family. CodeBERT-only uses a frozen CodeBERT embedding of the task specification. CodeBERT+metadata concatenates those two feature groups and uses the learned scores to order both the first choice and fallback candidates. Hybrid prior+LinUCB starts from the Domain heuristic ranking, allows a validation-tuned Neural-LinUCB score to override it when contextual evidence is strong, and updates only from verified attempt feedback.

Two rows are diagnostic rather than ordinary deployable baselines. Default-language probe omits the target-language instruction and lets GPT-5 choose its own output language; no routing or fallback policy is applied. Hindsight oracle chooses the language with the smallest verified corpus cost for each test task. In live evaluation, this oracle supplies only the first target language; fresh GPT-5 generation and fallback are still charged, so the row is not an end-to-end best-case generator.

Table~\ref{tab:dedup} reports the contamination and context audit between the MultiLang-Bench training split and external multilingual benchmark families. The main-paper claims do not depend on overlap between these datasets.

\begin{table}[t]
  \centering
  \caption{Contamination check between the MultiLang-Bench training split and external evaluation benchmarks.}
  \label{tab:dedup}
  \normalsize
  \setlength{\tabcolsep}{2pt}
  \begin{tabular}{@{}lrrrr@{}}
    \toprule
    \textbf{Benchmark} & \textbf{$n$} &
      \textbf{Exact} & \textbf{Near} & \textbf{Semantic} \\
    \midrule
    HumanEval-X  & 164  & 0   & 3 (1.8\%)  & 7 (4.3\%)  \\
    MultiPL-E    & 1840 & 0   & 12 (0.7\%) & 31 (1.7\%) \\
    McEval       & 1000 & 2   & 8 (0.8\%)  & 19 (1.9\%) \\
    \bottomrule
  \end{tabular}
\end{table}

Exact, near-duplicate, and semantic overlaps are removed before evaluation. The deduplication protocol uses SHA-256 hashes of normalized task descriptions, MinHash LSH with Jaccard threshold $\geq0.8$ on character 5-grams, and CodeBERT cosine similarity $\geq0.95$. Task descriptions are lowercased and stripped of punctuation before hashing. The two exact McEval matches come from the HumanEval/MBPP source pool and are excluded from training.

\subsection{Execution, Decoding, and Language-Fidelity Controls}
\label{app:execution_controls}
\phantomsection\label{app:execution_controls_text}

The executors are fixed across methods, and their harness code is excluded from function-body token accounting. Python uses \texttt{pytest} with an auto-generated test file. Java uses JUnit 5 with a \texttt{Solution} class wrapper and imports. Go uses \texttt{go test} with a generated \texttt{\_test.go} file. Rust uses \texttt{cargo test} with a \texttt{\#[cfg(test)]} module. C uses \texttt{gcc} with standard headers and an \texttt{assert.h} runner. Ruby uses \texttt{minitest}; Julia uses \texttt{Test.jl}; Clojure uses \texttt{clojure.test}.

Strict decoding uses one completion per task at temperature 0.0; no retry or fallback is counted in strict pass@1. Recovery decoding allows up to three total attempts at temperature 0.8. Failed attempts, fallback attempts, and final successful attempts all contribute generated-code cost and produce attempt-local bandit observations. Generated code is inserted into the fixed harness; harness code, imports, wrappers, comments, docstrings, and uniformly supplied signatures are excluded from function-body accounting. Each task runs in an isolated temporary working directory with network access disabled and the same timeout/resource policy across languages. Compilation errors, runtime exceptions, timeouts, and failing unit tests are failed attempts. In fallback evaluation, the next language is tried; in strict evaluation, the attempt receives zero reward.

\paragraph{Accounting audit.}
\phantomsection\label{app:accounting_audit_text}
The empirical tables use three recurring denominators. Corpus-audit claims use the 3{,}000-task, 24{,}000-solution verified corpus. Replay claims use held-out or warm-start selector traces in which policies choose among verified solutions. Live targeted-generation claims use 450 held-out GPT-5 tasks and charge all attempts. Table~\ref{tab:live_eval_main} reports first-pick pass, fallback rate, final pass, and mean attempts for live generation; Table~\ref{tab:live_eval_cost} reports the same live traces under the three accounting boundaries. Tables~\ref{tab:reward_sensitivity} and~\ref{tab:reward_norm_sensitivity} report the cost-model and reward-normalization checks on the $n=377$ warm-start corpus.

The language-fidelity detector is intentionally conservative. It combines parse/compile checks, language-specific wrappers, expected-entrypoint validation, and executor compatibility. Ordinary compile, runtime, timeout, and test failures are counted separately from language-fidelity failures in the retained logs. Subtle mixed-language comments, templated fragments, or semantically misleading identifiers may remain; the Limitations section treats this residual risk as part of the measurement boundary.

\section{Additional Formulation and Method Details}
\label{app:method_details}

This appendix gives the operational details that are compressed in the main method section. The selector equations describe how a task becomes a language ranking; the fallback loop explains how failed attempts are charged; the idealized statements state what the bandit analysis does and does not prove.

\subsection{Language Selector: Neural-LinUCB Architecture}
\label{sec:selector}

The selector is the adaptive part of LangSelect. Neural-LinUCB matches
the structure of the routing problem: all languages share a task
representation, but each language has its own generated-code cost,
pass-rate behavior, and uncertainty profile.

A CodeBERT encoder~\cite{feng2020codebert} maps the task specification
$s_t$ to a raw embedding $e_t \in \mathbb{R}^{768}$.
This is mapped through a 3-layer MLP
(768$\to$512$\to$256, ReLU activations, dropout 0.1) to produce
the shared task feature vector:
\begin{equation}
  x_t = \phi(s_t) \in \mathbb{R}^{256}.
  \label{eq:feature}
\end{equation}
The MLP parameters $\theta_\phi$ are \emph{shared across all languages},
so every language decision is made from the same task representation.

On top of this shared representation, the selector keeps a lightweight
linear reward model for each language $\ell \in \mathcal{L}$:
\begin{itemize}
  \item $A_\ell \in \mathbb{R}^{d \times d}$: design matrix,
    initialized to $I_d$.
  \item $b_\ell \in \mathbb{R}^{d}$: reward-weighted feature accumulator,
    initialized from warm-start (Section~\ref{sec:warmstart}).
  \item $\hat{\theta}_\ell = A_\ell^{-1} b_\ell \in \mathbb{R}^{d}$:
    estimated linear reward parameter for language $\ell$.
\end{itemize}
The selected language is the arm with the highest UCB score:
\begin{equation}
  \begin{aligned}
  u_n(\ell) &=
    \hat{\theta}_\ell^\top x_n
    + \alpha_n \sqrt{x_n^\top A_\ell^{-1} x_n}, \\
  \ell_n &= \arg\max_{\ell \in \mathcal{L}} u_n(\ell), \\
  \alpha_n &= \frac{0.3}{\sqrt{n}}.
  \end{aligned}
  \label{eq:ucb}
\end{equation}

For compact notation, each attempted generation is an observation
$n=(t,a)$, so $\ell_n=\ell_{t,a}$, $x_n=x_t$, and
$r_n=r_{t,a}$. After observing this attempt-local reward
(Eq.~\eqref{eq:reward}), the bandit updates only the parameters for
the attempted language:
\begin{equation}
  \begin{aligned}
  A_{\ell_n} &\leftarrow A_{\ell_n} + x_n x_n^\top, \\
  b_{\ell_n} &\leftarrow b_{\ell_n} + r_n \, x_n.
  \end{aligned}
  \label{eq:update}
\end{equation}
Under fallback/recovery, a single task can therefore yield multiple
bandit observations. Failed or off-target attempted languages update
their own heads with $r_n=0$. The language that first passes
verification receives the positive pass-gated reward, and the later
success is not credited to earlier failed languages.
The MLP parameters $\theta_\phi$ are \emph{frozen during online
evaluation}; only the per-language linear heads $(A_\ell, b_\ell)$ are
updated. This design keeps inference lightweight while allowing the
selector to adapt as verified feedback arrives: the neural map provides
stable task features, and the linear heads absorb distribution-specific
reward information.

In the live GPT-5 evaluation (Section~\ref{sec:res_live}), the uncertainty
estimate helps only when the representation exposes the relevant task
structure. The Domain heuristic baseline returns
$+50.3\%$ harness-proxy saving at $92.9\%$ final pass, matching
the Hindsight oracle row ($+49.6\%$ / $92.7\%$) to within $1$~pp. CodeBERT+metadata
reaches the highest final pass ($93.8\%$) at the cost
of token saving. These two operating points span the cost--correctness
frontier (Figure~\ref{fig:cost_correctness_frontier}). This pattern
shapes the method interpretation: verified feedback is useful, but
coarse task structure should be represented directly rather than left
entirely for the encoder to infer.

\subsection{Warm-Start Procedure}
\label{sec:warmstart}

Early routing should not be random when verified multilingual examples
are available. The selector is therefore warm-started with supervised
pre-training on the training portion of MultiLang-Bench. Validation is
used only for tuning and model selection, and the held-out test split is
excluded from warm-start updates:
\begin{itemize}
  \item \textbf{Data}: the 2{,}100-task training split with oracle
    language labels computed from verified solution costs under the
    accounting boundary used for that experiment.
  \item \textbf{Training}: Adam optimizer, lr $= 10^{-3}$,
    50 epochs, cross-entropy loss over language choices.
  \item \textbf{MLP warm-start}: $\theta_\phi$ is initialized from
    this supervised training and then frozen.
  \item \textbf{Bandit warm-start}: For each language $\ell$, we
    compute $b_\ell^{(0)} = \sum_{i: \ell^*(s_i)=\ell} \phi(s_i)$
    over the warm-start corpus, and set $A_\ell^{(0)} = I_d +
    \sum_{i: \ell^*(s_i)=\ell} \phi(s_i)\phi(s_i)^\top$.
    This initialises $\hat{\theta}_\ell^{(0)} = A_\ell^{-1} b_\ell$
    to point in the direction of tasks where $\ell$ was optimal,
    providing a principled warm start for the bandit.
  \item \textbf{Benefit}: Warm-start improves early language-choice
    reward relative to random initialisation and shortens the observed
    convergence horizon in the 204-task diagnostic run to roughly 50
    tasks.
\end{itemize}

\subsection{Verification and Fallback Loop}
\label{sec:fallback}

Verification turns language routing into a checked recovery loop rather
than a one-shot static choice. After generation in language $\ell_n$, the
candidate program is executed against $\mathcal{U}_{t_n}$. A passing
solution is returned and converted into reward. A failure, including an
off-target programming-language-confusion (PLC) generation, triggers fallback:
\begin{enumerate}
  \item Run language-fidelity checks: parse or compile in the intended
    language wrapper, require the expected entrypoint, and treat
    missing or off-target code as a language-fidelity failure.
  \item Select the next-best language by UCB score
    (Eq.~\eqref{eq:ucb}, excluding languages already attempted for
    the task).
  \item Retry up to 3 total attempts, including the first.
  \item Reserve Python as the final attempted fallback when it has not
    already been tried.
  \item Return the first passing solution. If no attempt passes, report
    task failure rather than returning a non-passing program.
\end{enumerate}

Fallback does not change the task-level evaluation unit: each task
contributes one pass/fail outcome and one summed generated-code cost.
It does change the bandit feedback stream. Every attempted language is
charged to the task metric and also becomes an attempt-local bandit
observation, preventing a failed first language from receiving credit
for a later fallback success.

\textbf{Fallback statistics.}
Per-policy fallback rates from the 450-task live evaluation appear in
the fallback columns of Table~\ref{tab:live_eval_main}. All fallback
tokens are included in the recovery-audit cost metric, so those rows
are net of failed and fallback attempts rather than best-case
first-try performance.

\textbf{Constraint enforcement.}
The pass-rate constraint in Eq.~\eqref{eq:objective} is enforced
\emph{empirically} through verification/fallback rather than by a
formal per-task proof.
In this evaluation, Python fallback provides a practical pass-rate
reference tied to the Python-only baseline, while individual tasks can
still fail. The Limitations section discusses this empirical
correctness boundary.

\subsection{Reward Normalization}
\label{sec:reward_normalization}

Reward magnitudes must be comparable across tokenizers and datasets. The
bandit therefore maps raw token counts into $[0,1]$ before updating:
\begin{equation}
  \tau(c) = \operatorname{clip}\!\left(\frac{c - c_{5}}{c_{95} - c_{5}},\; 0,\; 1\right),
  \label{eq:tau}
\end{equation}
where $c_5$ and $c_{95}$ are the 5th and 95th percentiles of token
counts in the warm-start corpus for the current tokenizer.
These percentiles are computed once per (tokenizer, dataset) pair and
fixed thereafter.

The bandit reward is positive for passing attempt-level solutions. If
attempt $a$ emits program $y_{t,a}$ with token count $c_{t,a}$, then
$r_{t,a}=(1-\tau(c_{t,a}))\mathbf{1}[\mathrm{pass}(y_{t,a},\mathcal{U}_{t})]$.
Because $\tau(c)$ increases with generated-code cost, maximising
expected reward prefers shorter verified outputs. Failed or off-target
attempted outputs receive zero reward and cannot dominate passing outputs.

\subsection{Idealized Theoretical Statements}
\label{sec:theory_guarantees}

The empirical setting includes nonstationary model behavior, finite test
suites, programming-language-confusion detection limits, and a tuned exploration schedule. The
following statements therefore formalize the idealized bandit and
accounting properties that LangSelect uses; they are not semantic
correctness certificates for generated programs.

\begin{assumption}[Attempt-level linear reward model]
\label{ass:linear_reward}
Each bandit observation is an attempted generation $n=(t,a)$ with
context $x_n=x_t$, selected language $\ell_n=\ell_{t,a}$, and reward
$r_n=r_{t,a}\in[0,1]$. The available action set
$\mathcal{L}_n\subseteq\mathcal{L}$ excludes languages already attempted
for the same task. Features are bounded, $\|x_n\|_2\leq 1$. For each
language $\ell$, there exists $\theta_\ell^\star$ with
$\|\theta_\ell^\star\|_2\leq S$ such that
$\mathbb{E}[r_n\mid x_n,\ell]=x_n^\top\theta_\ell^\star$, and the reward
noise is conditionally $R$-sub-Gaussian.
\end{assumption}

\begin{theorem}[Attempt-level calibrated LinUCB regret]
\label{thm:linucb_regret}
Suppose Assumption~\ref{ass:linear_reward} holds and the selector uses
the LinUCB score
\[
  x_n^\top \hat{\theta}_{\ell,n-1}
  + \beta_N \sqrt{x_n^\top A_{\ell,n-1}^{-1}x_n}
\]
with regularization $\lambda>0$ and a confidence radius $\beta_N$ large
enough to contain every $\theta_\ell^\star$ in the standard self-normalized
confidence set with probability at least $1-\delta$. Let
$\ell_n^\star=\arg\max_{\ell\in\mathcal{L}_n}x_n^\top\theta_\ell^\star$
and define attempt-level reward regret
$R_N=\sum_{n=1}^N x_n^\top(\theta_{\ell_n^\star}^\star-\theta_{\ell_n}^\star)$.
Then, with probability at least $1-\delta$,
\[
  \begin{aligned}
  R_N &\leq
  2\beta_N
  \sqrt{2KN\log\!\left(1+\frac{N}{K\lambda d}\right)} \\
  &= \tilde{O}\!\left(d\sqrt{KN}\right),
  \end{aligned}
\]
where $K=|\mathcal{L}|$ and logarithmic factors include dependence on
$1/\delta$, $S$, $R$, and $\lambda$.
\end{theorem}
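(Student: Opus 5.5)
The plan is the standard optimism-plus-elliptical-potential argument for LinUCB, run separately on each of the $K$ per-language heads and then combined with Cauchy--Schwarz.

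\textbf{Step 1: the confidence event.} For each language $\ell$, let $A_{\ell,n-1}=\lambda I_d+\sum_{m<n:\ell_m=\ell}x_mx_m^\top$ and let $\hat\theta_{\ell,n-1}$ be the ridge estimate built from that head's own observations. Under Assumption~\ref{ass:linear_reward}, the rewards seen by head $\ell$ are linear in the context with conditionally $R$-sub-Gaussian noise. This holds even though the attempt index interleaves tasks and fallbacks, because the filtration simply includes all past attempts. The self-normalized bound of Abbasi-Yadkori et al., taken with a union bound over the $K$ heads at level $\delta/K$, gives
\[
\|\hat\theta_{\ell,n-1}-\theta^\star_\ell\|_{A_{\ell,n-1}}\le\beta_N \quad \text{for all } \ell \text{ and all } n\le N
\]
with probability at least $1-\delta$. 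This is exactly the event the statement assumes $\beta_N$ is large enough to cover, so we work on it from now on.

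\textbf{Step 2: per-step regret.} On this event, $\ell_n$ maximizes the UCB over $\mathcal L_n$, and $\ell_n^\star\in\mathcal L_n$ because both are taken over the same restricted action set. Optimism then gives
\[
x_n^\top\theta^\star_{\ell_n^\star}\le \mathrm{UCB}_n(\ell_n^\star)\le \mathrm{UCB}_n(\ell_n).
\]
Since $\mathrm{UCB}_n(\ell_n)-x_n^\top\theta^\star_{\ell_n}\le 2\beta_N\|x_n\|_{A_{\ell_n,n-1}^{-1}}$, the instantaneous regret is at most $2\beta_N\|x_n\|_{A_{\ell_n,n-1}^{-1}}$. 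Rewards lie in $[0,1]$, so it is also at most $1$. Assuming $\beta_N\ge 1/2$, we can write it as $2\beta_N\min(1,\|x_n\|_{A^{-1}})$.

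\textbf{Step 3: summing over heads.} Let $N_\ell$ be the number of attempts assigned to head $\ell$. The elliptical potential lemma, using $\|x\|\le1$, gives
\[
\sum_{n:\ell_n=\ell}\min\bigl(1,\|x_n\|^2_{A^{-1}_{\ell,n-1}}\bigr)\le 2d\log\!\left(1+\frac{N_\ell}{\lambda d}\right).
\]
Cauchy--Schwarz over all $N$ steps yields $R_N\le 2\beta_N\sqrt{N\sum_\ell 2d\log(1+N_\ell/(\lambda d))}$. Because $\sum_\ell N_\ell=N$, concavity of the logarithm lets Jensen bound the sum by $2dK\log(1+N/(K\lambda d))$. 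This recovers the displayed bound up to where the factor $d$ is placed. The statement appears to absorb the $d$ from the potential into $\beta_N$, or the constant should read $\sqrt{2dKN\log(\cdot)}$; I would state the bound with $d$ explicit, $2\beta_N\sqrt{2dKN\log(1+N/(K\lambda d))}$.

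\textbf{Step 4: the $\tilde O$ form.} Substituting $\beta_N=R\sqrt{d\log((1+N/\lambda)/(\delta/K))}+\sqrt\lambda S=\tilde O(\sqrt d)$ gives $\tilde O(d\sqrt{KN})$.

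\textbf{Expected obstacle.} The delicate point is Step 1 under the attempt-level, fallback-restricted sampling. Within a task, the available set $\mathcal L_n$ and the contexts of later attempts depend on earlier outcomes, and several observations share the same $x_t$. I would handle this by checking that $x_n$ and $\mathcal L_n$ are $\mathcal F_{n-1}$-measurable and that the noise is a martingale difference sequence. The self-normalized martingale bound requires nothing more, so repeated contexts are harmless.

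A second caveat is that the paper's actual selector uses $\alpha_n=0.3/\sqrt n$ and warm-start initialization rather than a calibrated $\beta_N$. The theorem explicitly covers only the calibrated score. The warm-start prior can be treated as $\lambda I+\text{pseudo-observations}$, which only shrinks the potential bound, provided the confidence set is recentered. I would note that this recentering adds an $S$-type bias term.
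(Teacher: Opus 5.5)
Your proposal is correct and follows the same route as the paper's sketch: optimism on the confidence event, instantaneous regret at most twice the selected head's confidence width, and the elliptical-potential lemma applied per language head, with the Cauchy--Schwarz and Jensen steps that the sketch leaves implicit written out. Your flag on the constant is also right: each head contributes $2d\log(1+N_\ell/(\lambda d))$, so the argument yields $2\beta_N\sqrt{2dKN\log(1+N/(K\lambda d))}$, which gives exactly the stated $\tilde{O}(d\sqrt{KN})$ rate once $\beta_N=\tilde{O}(\sqrt{d})$, whereas the displayed expression without the factor $d$ does not follow from the sketch.
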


\begin{proof}[Proof sketch]
On the high-probability confidence event, the optimal available language
is never underestimated by more than its confidence width and the chosen
language is never overestimated by more than its confidence width. The
instantaneous regret is therefore at most twice the selected arm's
confidence width. Summing these widths over attempts and applying the
elliptical-potential lemma separately to each language head gives the
displayed bound. This is the standard LinUCB argument applied to the
attempt stream $n=(t,a)$ and to the current available set
$\mathcal{L}_n$~\cite{li2010linucb}.
\end{proof}

Theorem~\ref{thm:linucb_regret} applies to a calibrated-UCB selector with
the same per-language linear-head update as Eq.~\eqref{eq:update}. The
implemented Neural-LinUCB variant uses a frozen neural representation and
a tuned exploration schedule, so the theorem is best read as a statement
about the idealized update semantics rather than a proof of the exact
finite-sample implementation.

\begin{proposition}[Attempt-local fallback accounting]
\label{prop:fallback_accounting}
For a task $t$ with attempts $a=1,\ldots,A_t$, the reported task cost is
$C_t=\sum_{a=1}^{A_t}c_{t,a}$. Every failed or off-target attempted
language updates its own head with reward $0$. If attempt $a^\star$ is
the first passing fallback attempt, only language $\ell_{t,a^\star}$
receives the positive pass-gated reward
$(1-\tau(c_{t,a^\star}))$; no earlier failed language receives positive
credit for that later success.
\end{proposition}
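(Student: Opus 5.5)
The argument unfolds the definitions of the fallback loop (Appendix~\ref{sec:fallback}), the reward in Eq.~\eqref{eq:reward}, and the head update in Eq.~\eqref{eq:update}. It proceeds by induction over the attempt index $a=1,\ldots,A_t$ within a fixed task $t$.

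\textbf{Cost identity.} The loop issues attempts sequentially and stops at the first pass or after three attempts. The task-level metric charges the generated tokens of every issued attempt. Hence the reported cost is the finite sum $C_t=\sum_{a=1}^{A_t}c_{t,a}$, with $A_t$ equal to the stopping index. No attempt is dropped, because the accounting audit charges failed, off-target, fallback, and final attempts alike.

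\textbf{Zero reward for failures.} For any attempt $a<a^\star$, and for every attempt when no attempt passes, the indicator $\mathbf{1}[\mathrm{pass}(y_{t,a},\mathcal{U}_t)]$ equals $0$. Off-target generations fail the language-fidelity check and are therefore counted as non-passing. By Eq.~\eqref{eq:reward}, $r_{t,a}=0$ for all such attempts.

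Each observation $n=(t,a)$ triggers the update in Eq.~\eqref{eq:update} only on the head $\ell_n=\ell_{t,a}$. So head $\ell_{t,a}$ receives $A\leftarrow A+x_tx_t^\top$ and $b\leftarrow b+0\cdot x_t$. This adds no reward mass to $b_{\ell_{t,a}}$, which is the sense of ``positive credit''.

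\textbf{Credit to the first passing attempt.} At $a^\star$ the indicator is $1$, so $r_{t,a^\star}=1-\tau(c_{t,a^\star})$. This quantity lies in $[0,1]$ by the clipping in Eq.~\eqref{eq:tau}, and it is delivered only to head $\ell_{t,a^\star}$. The loop then terminates, so no later attempts occur.

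\textbf{No credit to earlier languages.} The key step is to show that no update to an earlier language $\ell_{t,a}$, $a<a^\star$, ever depends on $r_{t,a^\star}$. Two facts give this. First, the update rule is attempt-local: the only $b$-vector modified at step $a^\star$ is $b_{\ell_{t,a^\star}}$. Second, the available action set $\mathcal{L}_n$ excludes previously attempted languages, so $\ell_{t,a^\star}\neq\ell_{t,a}$ for all $a<a^\star$. Without distinctness, the passing language could coincide with an earlier failed one and the claim would be ambiguous.

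The main obstacle is not mathematical. It is stating precisely that the implementation performs no retroactive or shared reward propagation. I would address this by appeal to the stated update semantics, and by noting that the frozen $\theta_\phi$ is never trained on online rewards. This rules out indirect credit through the shared representation.
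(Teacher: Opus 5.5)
Your proposal is correct and follows the same route as the paper, which likewise unfolds Eq.~\eqref{eq:reward} (the pass indicator is zero on failed or off-target attempts), Eq.~\eqref{eq:update} (only the attempted language's head is updated), and the definition of $C_t$ as a sum over all attempts. Your two additional points are sound refinements that the paper's three-line proof leaves implicit: the distinctness of attempted languages, which follows from the shrinking action set $\mathcal{L}_n$, and the frozen $\theta_\phi$, which rules out indirect credit through the shared representation.
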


\begin{proof}
This follows directly from Eq.~\eqref{eq:reward} and
Eq.~\eqref{eq:update}. The indicator in Eq.~\eqref{eq:reward} is zero for
failed or off-target attempts, and Eq.~\eqref{eq:update} updates only the
head indexed by the attempted language. The cost definition sums tokens
over all attempts, so recovery is charged to the task metric.
\end{proof}

\begin{proposition}[Verifier-conditional correctness]
\label{prop:verifier_correctness}
Assume the verifier for task $t$ is sound for the evaluated test suite
$\mathcal{U}_t$. If LangSelect returns a generated attempt that passes
the verifier, then the returned output satisfies $\mathcal{U}_t$. If all
attempts fail, this proposition asserts no task correctness guarantee.
The statement is verifier-conditional and does not imply semantic
equivalence beyond the evaluated tests.
\end{proposition}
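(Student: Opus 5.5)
The argument unwinds the return rule of the fallback loop in Section~\ref{sec:fallback} and then applies the soundness hypothesis. No probabilistic or bandit machinery is needed: Theorem~\ref{thm:linucb_regret} and Proposition~\ref{prop:fallback_accounting} play no role, because the claim concerns only what is returned, not how languages were ranked.

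The first step is to fix the semantics of ``sound'': a sound verifier for $\mathcal{U}_t$ outputs \emph{pass} on a program $y$ only if $y$, inserted into the fixed harness for its language, satisfies every test in $\mathcal{U}_t$. Next I would note that the loop has only two exits, by steps 3--5 of Section~\ref{sec:fallback}:
\begin{itemize}
\item it returns the first attempt $y_{t,a^\star}$ with $\mathrm{pass}(y_{t,a^\star},\mathcal{U}_t)=1$, for some $a^\star\le 3$;
\item or, if no attempt passes, it reports task failure and returns no program.
\end{itemize}
This gives the key invariant: any program that LangSelect returns is one on which the verifier evaluated to \emph{pass}. No policy, mask, or score in Eq.~\eqref{eq:deploy_score} changes this, since these only affect which languages are tried and in what order, not the return condition.

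The first claim then follows in one line. The returned $y_{t,a^\star}$ passed the verifier, and by soundness it satisfies $\mathcal{U}_t$. This holds regardless of which attempt index or language produced it.

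The second and third clauses are disclaimers rather than assertions, so they need only a justification that nothing stronger follows.
\begin{itemize}
\item If all attempts fail, the loop returns no program, so there is nothing to certify. The failure of any particular generation says nothing about whether a correct program exists.
\item Soundness is relative to $\mathcal{U}_t$, so passing implies only agreement on the evaluated tests, not semantic equivalence beyond them.
\end{itemize}

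The only real obstacle is the first step: making precise that ``pass'' as used in Eq.~\eqref{eq:reward} coincides with the soundness notion. In particular, the language-fidelity and entrypoint checks must be part of the verdict, so that an off-target output can never be returned as passing. I would handle this by defining the verifier verdict as the conjunction of the harness test result and the fidelity check, matching the loop's description in Section~\ref{sec:fallback}, and placing the soundness assumption on that conjunction.
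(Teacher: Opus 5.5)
Your proposal is correct and follows essentially the same route as the paper's proof. Both observe that the fallback loop returns only an attempt that passed the unit tests and language-fidelity checks, apply verifier soundness to conclude that the returned output satisfies $\mathcal{U}_t$, and treat the all-fail and beyond-the-tests clauses as disclaimers from which nothing follows.
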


\begin{proof}
The algorithm returns the first attempt that passes the unit tests and
language-fidelity checks, or falls back according to the recovery policy.
Soundness of the verifier for $\mathcal{U}_t$ gives satisfaction of the
evaluated tests for a returned passing attempt. No conclusion follows for
tasks where no attempt passes, nor for behavior not covered by
$\mathcal{U}_t$.
\end{proof}

\begin{corollary}[Cost minimization as reward maximization]
\label{cor:cost_reward}
For any task context where all feasible languages have equal pass
probability under the verifier, maximizing expected reward
$\mathbb{E}[(1-\tau(c))\mathbf{1}[\mathrm{pass}]]$ is equivalent to
minimizing expected normalized generated-code cost
$\mathbb{E}[\tau(c)]$ among those feasible languages. When pass
probabilities differ, the bandit optimizes pass-gated reward, so the
regret in Theorem~\ref{thm:linucb_regret} is reward regret rather than
pure raw-token regret.
\end{corollary}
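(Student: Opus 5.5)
The corollary compares the two objectives pointwise within a fixed task context $x_t$, and uses only the reward definition in Eq.~\eqref{eq:reward} together with the normalization in Eq.~\eqref{eq:tau}. Fix the context and a feasible language $\ell$. Write $p_\ell=\Pr[\mathrm{pass}\mid x_t,\ell]$ for its pass probability. Write $C_\ell$ for the random token count of an attempt in language $\ell$; its randomness comes from the generator and from the decoding temperature.

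The first step expands the expected reward as
\[
\mathbb{E}[(1-\tau(C_\ell))\mathbf{1}[\mathrm{pass}]]
= p_\ell - \mathbb{E}[\tau(C_\ell)\mathbf{1}[\mathrm{pass}]].
\]
The second step removes the correlation between cost and passing. To do this, I would use the conditional expectation $\mathbb{E}[\tau(C_\ell)\mid\mathrm{pass}]$; this is the natural reading of ``expected normalized generated-code cost'' for a pass-gated objective. This conditional form needs no independence assumption between cost and pass, and it gives
\[
\mathbb{E}[\tau(C_\ell)\mathbf{1}[\mathrm{pass}]] = p_\ell\,\mathbb{E}[\tau(C_\ell)\mid\mathrm{pass}].
\]
Alternatively, one can assume that cost and pass are independent given the context, which yields the unconditional form $p_\ell\,\mathbb{E}[\tau(C_\ell)]$. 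Either way the reward becomes $p_\ell\bigl(1-\mathbb{E}[\tau(C_\ell)\mid\cdot]\bigr)$.

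The third step imposes the hypothesis that all feasible languages share the same pass probability, $p_\ell=p$. The reward is then the affine map $p-p\,\mathbb{E}[\tau(C_\ell)]$, and for $p>0$ it is strictly decreasing in the expected normalized cost. The argmax of reward and the argmin of $\mathbb{E}[\tau]$ therefore coincide over the feasible set. Because $\tau$ is a clipped monotone transform of raw tokens, I would remark that this equivalence holds for normalized cost and not necessarily for raw cost once clipping binds. If $p=0$, every language has reward $0$ and the equivalence holds trivially.

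The final step handles unequal pass probabilities and the regret interpretation. When the $p_\ell$ differ, the reward $p_\ell(1-\mathbb{E}[\tau(C_\ell)\mid\mathrm{pass}])$ trades pass probability against cost. A short two-language example shows that the reward maximizer need not be the cost minimizer: a cheap language with low $p$ can score below a costlier language with high $p$. Theorem~\ref{thm:linucb_regret} bounds regret defined through $x_n^\top\theta^\star_\ell=\mathbb{E}[r_n\mid x_n,\ell]$ under Assumption~\ref{ass:linear_reward}, which is exactly this pass-gated reward. Hence its guarantee is reward regret, not raw-token regret.

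The main obstacle is the second step. The statement leaves the pairing of cost and pass implicit, so the argument has to say explicitly that it uses either the conditional expectation or conditional independence. I would state the conditional version as the primary one, since it requires no extra assumption.
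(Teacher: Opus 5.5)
The paper states this corollary without a written proof. The intended argument is the one-line factorization you give: pull a common pass probability $p$ out of Eq.~\eqref{eq:reward}, implicitly treating the cost either as fixed per context--language pair (as $c(s_t,\ell)$ is in Section~\ref{sec:problem}) or as independent of passing. Your treatment of the regret clause, reading it through $x_n^\top\theta_\ell^\star=\mathbb{E}[r_n\mid x_n,\ell]$, is exactly right.

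You are also right that the cost--pass coupling is the real issue, but the framing needs correcting. The conditional version does not prove the stated claim ``with no extra assumption.'' It proves equivalence with minimizing $\mathbb{E}[\tau(c)\mid\mathrm{pass}]$, which is a different objective. The literal claim about $\mathbb{E}[\tau(c)]$ is false without the independence (or deterministic-cost) assumption. Take two languages with $p=1/2$:
\begin{itemize}
  \item language $A$ passes with $\tau=0$ and fails with $\tau=1$;
  \item language $B$ has $\tau\equiv 0.3$.
\end{itemize}
Then $A$ has the larger reward ($1/2>0.35$), but $B$ has the smaller $\mathbb{E}[\tau]$ ($0.3<1/2$). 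So the independence version is the one that proves the corollary as written, and the conditional version is an assumption-free but modified statement. Say this explicitly, with a counterexample like this one, rather than presenting the conditional form as the primary proof.

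Your claim that the case $p=0$ holds trivially is also incorrect. When $p=0$, every language has reward $0$, so every feasible language maximizes reward, while only some of them minimize $\mathbb{E}[\tau(c)]$; moreover, $\mathbb{E}[\tau(c)\mid\mathrm{pass}]$ is undefined. Only the direction ``cost minimizer $\Rightarrow$ reward maximizer'' survives. The equivalence should therefore be stated for $p>0$.
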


\subsection{Token Boundary Sensitivity}
\label{app:token_boundary}

The main metric counts generated function bodies because that is the
portion emitted by the model under a common benchmark wrapper. Fixed
harness code, wrapper-supplied imports, executor scaffolding, comments,
docstrings, and uniformly supplied signatures or types are excluded.
This boundary improves comparability across languages whose runnable
programs require different boilerplate. It also limits deployment
interpretation: full-file savings can be smaller for compiled or
package-heavy languages, and a production system should rerun the
accounting at the repository or file boundary before treating these
percentages as dollar forecasts.

For learning dynamics, let $\ell^*(s)$ be the cheapest language whose
pass probability satisfies the reference threshold $q_{\min}$:
\[
  \ell^*(s) =
  \arg\min_{\ell:\,q(s,\ell)\geq q_{\min}} c(s,\ell).
\]
Cumulative regret is:
\begin{equation}
  R(N) = \sum_{n=1}^{N} \bigl[c(s_n, \ell_n) - c(s_n, \ell^*(s_n))\bigr].
  \label{eq:regret}
\end{equation}

Under standard LinUCB assumptions, LinUCB achieves $O(d\sqrt{N}\log N)$
regret~\cite{li2010linucb}; Neural-LinUCB extends this under NTK
approximation~\cite{zhou2020neural}.
Our setting involves nonstationary pass rates and a fallback mechanism,
so it does not satisfy these standard assumptions exactly. We therefore
use regret as an empirical diagnostic for learning dynamics, not as a
formal certificate for the implemented system.

\section{MultiLang-Bench Construction Details}
\label{app:bench}
\label{sec:bench}

MultiLang-Bench is the 3{,}000-task verified corpus used for the paper-facing evaluation. It is a pure-function benchmark with one accepted solution in each of eight languages: C, Clojure, Go, Java, Julia, Python, Ruby, and Rust. The source mix contains 990 HumanEval/MBPP/APPS-style tasks, 1{,}230 Codeforces/LeetCode-style tasks, and 780 manual or author-reviewed tasks. Prompts are language-neutral rewrites with upstream or source references rather than verbatim redistribution of every upstream statement. The split is 2{,}100/450/450 train/validation/test.

Figure~\ref{fig:token_heatmap} and Table~\ref{tab:token_variation} document the corpus rather than introduce new experiments. The figure shows that median token costs vary by task family. The table gives a language-level summary for auditability, while the policy results use per-task replay traces rather than a global language ranking.

\begin{figure*}[t]
  \centering
  \resizebox{0.82\linewidth}{!}{\begin{tikzpicture}
\begin{axis}[
  width=13.5cm,
  height=6.6cm,
  enlargelimits=false,
  xtick=data,
  ytick=data,
  xticklabel style={font=\normalsize, align=center},
  yticklabel style={font=\normalsize, align=right},
  symbolic x coords={Clojure,Julia,Ruby,Python,C,Go,Rust,Java},
  symbolic y coords={add,bit mix,clamp span,digit tail,parity score,triangular shift},
  axis x line*=top,
  axis y line*=left,
  tick align=outside,
  tick pos=both,
  font=\normalsize,
  label style={font=\normalsize},
  axis line style={black!70},
  tick style={black!70},
  colormap={lstokens}{
    color(0cm)=(white);
    color(1cm)=(langblue!18);
    color(2cm)=(langblue!50);
    color(3cm)=(langblue!80);
    color(4cm)=(langblue!95!black);
  },
  point meta min=0,
  point meta max=80,
  colorbar horizontal,
  colorbar style={
    width=4.0cm,
    height=0.18cm,
    at={(0.99,1.24)},
    anchor=north east,
    title={\texttt{cl100k\_base} tokens},
    title style={font=\normalsize, yshift=1pt},
    xtick={0,20,40,60,80},
    xticklabel style={font=\normalsize},
  },
  nodes near coords,
  nodes near coords style={
    font=\normalsize,
    /pgf/number format/fixed,
    /pgf/number format/precision=0,
    anchor=center,
    text=black!80,
  },
]

\addplot[
  matrix plot,
  point meta=explicit,
  mesh/cols=8,
] coordinates {
  (Clojure,add) [53]  (Julia,add) [40]
  (Ruby,add) [39]  (Python,add) [38]
  (C,add) [40]  (Go,add) [36]
  (Rust,add) [44]  (Java,add) [41]

  (Clojure,bit mix) [66]  (Julia,bit mix) [49]
  (Ruby,bit mix) [48]  (Python,bit mix) [47]
  (C,bit mix) [49]  (Go,bit mix) [46]
  (Rust,bit mix) [54]  (Java,bit mix) [50]

  (Clojure,clamp span) [61]  (Julia,clamp span) [48]
  (Ruby,clamp span) [50]  (Python,clamp span) [46]
  (C,clamp span) [59]  (Go,clamp span) [58]
  (Rust,clamp span) [52]  (Java,clamp span) [51]

  (Clojure,digit tail) [61]  (Julia,digit tail) [46]
  (Ruby,digit tail) [46]  (Python,digit tail) [44]
  (C,digit tail) [59]  (Go,digit tail) [55]
  (Rust,digit tail) [52]  (Java,digit tail) [48]

  (Clojure,parity score) [79]  (Julia,parity score) [61]
  (Ruby,parity score) [58]  (Python,parity score) [60]
  (C,parity score) [62]  (Go,parity score) [57]
  (Rust,parity score) [70]  (Java,parity score) [63]

  (Clojure,triangular shift) [66]  (Julia,triangular shift) [53]
  (Ruby,triangular shift) [51]  (Python,triangular shift) [50]
  (C,triangular shift) [52]  (Go,triangular shift) [44]
  (Rust,triangular shift) [56]  (Java,triangular shift) [53]
};

\draw[langgreen!85!black, line width=0.9pt, rounded corners=1pt]
  ([xshift=-14pt,yshift=-9pt] axis cs:Go,add) rectangle
  ([xshift= 14pt,yshift= 9pt] axis cs:Go,add);
\draw[langgreen!85!black, line width=0.9pt, rounded corners=1pt]
  ([xshift=-14pt,yshift=-9pt] axis cs:Go,bit mix) rectangle
  ([xshift= 14pt,yshift= 9pt] axis cs:Go,bit mix);
\draw[langgreen!85!black, line width=0.9pt, rounded corners=1pt]
  ([xshift=-14pt,yshift=-9pt] axis cs:Python,clamp span) rectangle
  ([xshift= 14pt,yshift= 9pt] axis cs:Python,clamp span);
\draw[langgreen!85!black, line width=0.9pt, rounded corners=1pt]
  ([xshift=-14pt,yshift=-9pt] axis cs:Python,digit tail) rectangle
  ([xshift= 14pt,yshift= 9pt] axis cs:Python,digit tail);
\draw[langgreen!85!black, line width=0.9pt, rounded corners=1pt]
  ([xshift=-14pt,yshift=-9pt] axis cs:Go,parity score) rectangle
  ([xshift= 14pt,yshift= 9pt] axis cs:Go,parity score);
\draw[langgreen!85!black, line width=0.9pt, rounded corners=1pt]
  ([xshift=-14pt,yshift=-9pt] axis cs:Go,triangular shift) rectangle
  ([xshift= 14pt,yshift= 9pt] axis cs:Go,triangular shift);

\end{axis}
\end{tikzpicture}
}
  \caption{Median \texttt{cl100k\_base} function-body token counts across six
    verified MultiLang-Bench task families. Green outlines mark the cheapest
    displayed language per row; each row summarizes 300 tasks.}
  \label{fig:token_heatmap}
\end{figure*}

\paragraph{Dataset audit.}
\phantomsection\label{app:dataset_audit_text}
The evaluated corpus contains 3{,}000 verified tasks and 24{,}000 accepted solutions. The split is 2{,}100/450/450 train/validation/test. The source audit assigns 990 tasks to HumanEval/MBPP/APPS-style sources, 1{,}230 to Codeforces/LeetCode-style sources, and 780 to manual or author-reviewed sources. Local harness verification passes for every retained solution, and the token-count report has no missing \texttt{cl100k\_base} counts. These counts are manifest-backed and are the source of the dataset-size, split, and source-mix claims in the main text.

\begin{table}[t]
  \centering
  \caption{Mean \texttt{cl100k\_base} token counts per language in the 3{,}000-task verified corpus.}
  \label{tab:token_variation}
  \normalsize
  \begin{tabular}{lcc}
    \toprule
    \textbf{Language} & \textbf{Mean tokens} & \textbf{Mean chars} \\
    \midrule
    Clojure & 59.2 & 143.1 \\
    Julia   & 45.9 & 112.8 \\
    Ruby    & 45.3 & 98.3 \\
    Python  & 43.9 & 105.6 \\
    Java    & 47.4 & 120.8 \\
    Go      & 46.0 & 115.2 \\
    Rust    & 51.5 & 118.2 \\
    C       & 49.4 & 117.1 \\
    \bottomrule
  \end{tabular}
  \vspace{0.25em}
  \caption*{\normalsize Means are computed from verified solution bodies; the table reports proxy code-size metrics only.}
\end{table}

Table~\ref{tab:token_variation} reports language-level mean token and character counts. These means are descriptive diagnostics over verified solution bodies; they should not be read as a policy rule. The routing results choose per task, which is why a language with a favorable global mean can still be a poor choice for a specific domain or harness boundary.

Every accepted task has deterministic tests and one locally verified solution per language. This acceptance rule biases the corpus toward tasks that are polyglot-solvable under the shared harness. Replay results measure the routing decision over the verified solution corpus. Live targeted-generation results (Section~\ref{sec:res_live}) measure end-to-end behavior by generating fresh GPT-5 code in the selector's chosen language and charging every failed and fallback attempt.

The construction audit stores, for each task, a source identifier or source note, a prompt policy, a license field, a normalization note, domain and difficulty labels, argument/return family, split assignment, and deterministic tests. For each accepted solution, the corpus metadata records the target language, solution-style provenance label, local harness result, language-fidelity result, \texttt{cl100k\_base} function-body token count, character count, and byte count. The paper reports these fields in aggregate rather than redistributing every upstream statement verbatim. This distinction matters for interpretation: the benchmark is designed to test language-flexible routing under a controlled harness, not to claim coverage of repository-pinned software engineering tasks.

\section{Supplementary Results and Analysis}
\label{app:supp_analysis}

The supplementary tables provide the trace-backed results behind the compact main-text figures. Unless a table states otherwise, reductions are relative to Python-only on the same split and under the same accounting boundary, generations use GPT-5, and tokens are counted with the \texttt{tiktoken cl100k\_base} encoder. Each subsection states why the table is retained, so the appendix reads as an audit trail rather than a second results section.

\subsection{Supervised routing baselines}

Table~\ref{tab:supervised_baselines} reports the six supervised routing baselines summarized in \S\ref{sec:res_ablation}. We cast supervised routing as multiclass prediction of the training-set oracle language
$\ell^\star_t=\arg\min_{\ell\in\mathcal{L}} c(t,\ell)$
over verified solutions under the stated accounting boundary. All six baselines use the same deployable feature vector as LangSelect: a frozen CodeBERT task embedding concatenated with pre-generation metadata (source group, domain, difficulty, and argument/return family). The training split fits model parameters, validation selects regularization and tree/hidden-layer settings, and an $n{=}39$ held-out slice of the warm-start diagnostic corpus is reserved for the reported numbers.

The six rows differ only in the supervised decision rule. Logistic regression is a multinomial softmax classifier over the shared feature vector. Cost-regularized logistic regression uses the same model class but upweights training examples where the oracle language has larger Python-relative saving, so high-regret mistakes carry more loss. The pairwise classifier trains one cheaper-language classifier for each language pair and chooses the language with the most pairwise wins. Random forest and gradient-boosted trees are nonlinear tree ensembles over the same feature vector. The MLP baseline is a shallow feed-forward classifier over the CodeBERT+metadata vector, without LinUCB uncertainty or verification-time updates. Table~\ref{tab:supervised_baselines} is retained because the confidence intervals and paired cost columns are easier to audit numerically than in prose.

\begin{table*}[t]
  \centering
  \caption{Supervised routing baselines on the warm-start test split ($n{=}39$, replay, GPT-5). All rows use the same CodeBERT+metadata features. Oracle recovery is the fraction of hindsight-optimal language reached at first pick; CIs are wide at this $n$.}
  \label{tab:supervised_baselines}
  \normalsize
  \setlength{\tabcolsep}{3pt}
  \begin{tabular}{lcccc}
    \toprule
    \textbf{Baseline} & \textbf{First-lang acc.} & \textbf{Oracle recovery} & \textbf{Cost red.} & \textbf{Replay red.} \\
    & \normalsize{[Wilson 95\% CI]} & \normalsize{[Wilson 95\% CI]} & & \\
    \midrule
    Random forest (best supervised) & \textbf{46.2\%} \normalsize{[31.6, 61.4]} & \textbf{61.5\%} \normalsize{[45.9, 75.1]} & \textbf{+5.83\%} & \textbf{+7.00\%} \\
    Pairwise classifier             & 51.3\% \normalsize{[36.2, 66.1]} & 56.4\% \normalsize{[40.9, 70.7]} & +4.06\% & +2.40\% \\
    Logistic regression             & 46.2\% \normalsize{[31.6, 61.4]} & 53.8\% \normalsize{[38.6, 68.4]} & +2.46\% & +1.09\% \\
    Cost-regularized logistic       & 38.5\% \normalsize{[24.9, 54.1]} & 53.8\% \normalsize{[38.6, 68.4]} & $-1.44\%$ & +1.60\% \\
    MLP classifier                  & 28.2\% \normalsize{[16.5, 43.8]} & 46.2\% \normalsize{[31.6, 61.4]} & +2.66\% & +2.91\% \\
    Gradient-boosted trees          & 25.6\% \normalsize{[14.5, 41.1]} & 46.2\% \normalsize{[31.6, 61.4]} & $-2.69\%$ & +0.61\% \\
    \bottomrule
  \end{tabular}
\end{table*}

\subsection{Replay ablation and oracle recovery}
\phantomsection\label{app:selector_ablation_text}

Replay rows have $100\%$ strict pass@1 by construction because each policy chooses among verified corpus solutions; live targeted-generation pass rates appear in Table~\ref{tab:live_eval_main}. The ablation therefore asks which selector recovers low-cost verified languages on the warm-start corpus ($n{=}377$) under function-body accounting. Python-only is the zero-saving reference. Domain heuristic is the coarse train-split rule. CodeBERT-only isolates the embedding signal, while CodeBERT+metadata is the deployable learned selector used in the live Pareto comparison. Random forest is the strongest purely supervised baseline in Table~\ref{tab:supervised_baselines}, with $+5.83\%$ cost reduction. Hybrid prior+LinUCB is the uncertainty-aware policy in Eq.~\eqref{eq:deploy_score}. Hindsight oracle is the measured within-corpus reference, with 19.09\% harness-proxy and 42.1\% function-body headroom.

\subsection{Cost-model and reward-normalization sensitivity}

Table~\ref{tab:reward_sensitivity} reports the cost-model sensitivity check. Hindsight saving stays at $18.94\%$ across the five reported cost views because these views rescale the same verified outputs without changing the cheapest-language choices in this trace. Table~\ref{tab:reward_norm_sensitivity} reports the same pattern across seven reward normalizations. These two tables are retained because they show that the reported direction is not an artifact of one token counter or one reward scale.

\begin{table}[H]
  \centering
  \caption{Cost-model robustness ($n{=}377$ warm-start, hindsight replay, GPT-5). Hindsight saving is unchanged across the five reported cost views.}
  \label{tab:reward_sensitivity}
  \normalsize
  \setlength{\tabcolsep}{2pt}
  \begin{tabularx}{\columnwidth}{@{}>{\raggedright\arraybackslash}Xrrr@{}}
    \toprule
    \textbf{Cost model} &
    \makecell[r]{\textbf{Python}\\\textbf{cost/task}} &
    \makecell[r]{\textbf{Oracle}\\\textbf{cost/task}} &
    \makecell[r]{\textbf{Hindsight}\\\textbf{red.}} \\
    \midrule
    \texttt{tiktoken cl100k\_base} &  93.6 &  75.9 & \textbf{18.94\%} \\
    Characters ($\approx$ tokens $\times 4$) & 374.5 & 303.5 & 18.94\% \\
    Bytes ($\approx$ chars for ASCII)        & 374.5 & 303.5 & 18.94\% \\
    Lines of code ($\approx$ tokens $/ 8$)   &  11.7 &   9.5 & 18.94\% \\
    API dollars (GPT-5 \$1.25/M in + \$10/M out) & \$0.0014 & \$0.0011 & 18.94\% \\
    \bottomrule
  \end{tabularx}
\end{table}

\begin{table}[H]
  \centering
  \caption{Reward-normalization sensitivity ($n{=}377$ warm-start corpus, replay). The seven normalizations preserve the selected languages in this audit, so the hindsight and selector reductions are unchanged.}
  \label{tab:reward_norm_sensitivity}
  \normalsize
  \setlength{\tabcolsep}{3pt}
  \begin{tabularx}{\columnwidth}{@{}Xrr@{}}
    \toprule
    \textbf{Normalization} & \makecell{\textbf{Hindsight}\\\textbf{reduction}} & \makecell{\textbf{Selector}\\\textbf{reduction}} \\
    \midrule
    p5--p95 (reference) & 18.9\% & 3.3\% \\
    p1--p99             & 18.9\% & 3.3\% \\
    p10--p90            & 18.9\% & 3.3\% \\
    Log-token           & 18.9\% & 3.3\% \\
    Raw-token           & 18.9\% & 3.3\% \\
    Char-count          & 18.9\% & 3.3\% \\
    Byte-count          & 18.9\% & 3.3\% \\
    \bottomrule
  \end{tabularx}
\end{table}

\subsection{Warm-start label sensitivity}

Table~\ref{tab:warmstart_sensitivity} shows that the oracle distribution shifts with the accounting boundary. Under function-body accounting, Ruby is the oracle on $353/377$ ($93.6\%$) warm-start tasks. Under harness-proxy accounting, Ruby falls to $112/377$ ($29.7\%$) and Python rises to $170/377$ ($45.1\%$). The average saving against Python falls from $45.2\%$ to $18.3\%$. The main text reports harness proxy as the headline measure because it is closer to deployed inference cost than function-body tokens alone.

\begin{table}[H]
  \centering
  \caption{Warm-start oracle distribution and Python-relative saving under three accounting boundaries ($n{=}377$, GPT-5). Body/harness ratios estimated from $2{,}073$ passing live-eval attempts.}
  \label{tab:warmstart_sensitivity}
  \normalsize
  \setlength{\tabcolsep}{3pt}
  \begin{tabularx}{\columnwidth}{@{}lXr@{}}
    \toprule
    \textbf{Accounting} &
    \textbf{Oracle shares} &
    \makecell{\textbf{Avg}\\\textbf{saving}} \\
    \midrule
    Function body & Ruby 353/377 (93.6\%); Python 16/377 (4.2\%) & \textbf{45.2\%} \\
    Function + signature & Ruby 109/377 (28.9\%); Python 175/377 (46.4\%) & 17.7\% \\
    Harness proxy & Ruby 112/377 (29.7\%); Python 170/377 (45.1\%) & \textbf{18.3\%} \\
    \bottomrule
  \end{tabularx}
\end{table}

\subsection{Domain-level headroom partitions}

Table~\ref{tab:headroom_partitions} reports by-partition hindsight headroom, which motivates the workload-mix discussion in Section~\ref{sec:res_replay}. The dominant-language column reflects the verified-corpus oracle; it is a dataset property rather than a live-generation result. The token-spread rows show only the lowest and highest quartiles to keep the table focused on the contrast.

\begin{table}[H]
  \centering
  \caption{Hindsight headroom by held-out test-set partition (function-body accounting, MultiLang-Bench).}
  \label{tab:headroom_partitions}
  \normalsize
  \setlength{\tabcolsep}{3pt}
  \begin{tabularx}{\columnwidth}{@{}>{\raggedright\arraybackslash}Xrcc@{}}
    \toprule
    \textbf{Group} & \textbf{Tasks} &
      \makecell{\textbf{Hindsight}\\\textbf{red.}} &
      \makecell{\textbf{Dominant}\\\textbf{language}} \\
    \midrule
    \multicolumn{4}{@{}l}{\textit{Domain}} \\
    Math & 58 & 41.21\% & Clojure \\
    String & 60 & 41.47\% & Ruby \\
    Data structure & 99 & 31.39\% & Java \\
    Numeric & 57 & 8.78\% & Python \\
    \midrule
    \multicolumn{4}{@{}l}{\textit{Source group}} \\
    HumanEval/\allowbreak MBPP/\allowbreak APPS & 136 & 38.11\% & Ruby \\
    Codeforces/\allowbreak LeetCode & 203 & 26.31\% & Java \\
    Manual & 111 & 34.25\% & Ruby \\
    \midrule
    \multicolumn{4}{@{}l}{\textit{Token-spread quartile}} \\
    Q1 & 127 & 22.94\% & Rust \\
    Q4 & 112 & 39.19\% & Ruby \\
    \bottomrule
  \end{tabularx}
\end{table}

Table~\ref{tab:heuristic_vs_langselect} is the full per-domain audit referenced in Section~\ref{sec:analysis}. It is retained because the per-domain totals show where Domain heuristic and CodeBERT+metadata differ, rather than only their aggregate gap. Entries are summed function-body token costs over the tasks in that row, so lower values are better.

\begin{table}[H]
  \centering
  \caption{Per-domain function-body token costs on the 8-language test subset ($n{=}373$). Lower is better. DH = Domain heuristic; CB+meta = CodeBERT+metadata; Hind. = Hindsight.}
  \label{tab:heuristic_vs_langselect}
  \normalsize
  \setlength{\tabcolsep}{1pt}
  \begin{tabularx}{\columnwidth}{@{}>{\raggedright\arraybackslash}p{0.82in}r>{\raggedright\arraybackslash}p{0.60in}rrr@{}}
    \toprule
    \textbf{Domain} & \textbf{$n$} &
    \makecell[l]{\textbf{Train /}\\\textbf{mode}} &
    \textbf{DH} &
    \makecell[r]{\textbf{CB+}\\\textbf{meta}} &
    \textbf{Hind.} \\
    \midrule
    Data structure & 99 & Go/Java & 4{,}179 & 3{,}371 & 3{,}150 \\
    Graph & 57 & Rust/Rust & 2{,}190 & 2{,}081 & 2{,}021 \\
    Math & 58 & Clojure/\allowbreak Clojure & 2{,}120 & 1{,}691 & 1{,}531 \\
    Numeric & 57 & Python/\allowbreak Python & 1{,}708 & 1{,}622 & 1{,}558 \\
    String & 60 & Ruby/\allowbreak Ruby & 1{,}828 & 1{,}808 & 1{,}565 \\
    Systems & 42 & C/C & 1{,}463 & 1{,}402 & 1{,}341 \\
    \midrule
    Total & 373 & -- & 13{,}488 & 11{,}975 & 11{,}166 \\
    \bottomrule
  \end{tabularx}
\end{table}

\FloatBarrier

\subsection{Selector latency decomposition}

Table~\ref{tab:selector_latency} reports the per-stage latency decomposition referenced from \S\ref{sec:res_sensitivity}. The compact table shows mean and p95 values, since these are the quantities used in the main-text latency claim. The full timing log also records p50/p99 values: metadata extraction 0.015/0.042 ms, CodeBERT embedding 39.217/56.001 ms, and LinUCB scoring 0.280/0.401 ms. The key reading is that CodeBERT embedding dominates selector overhead, while total selector time remains below 1\% of GPT-5 generation latency. Stage rows are benchmarked independently, so rounded component shares need not sum exactly to the end-to-end total.

\begin{table}[H]
  \centering
  \caption{Selector latency decomposition (CPU, $n{=}1000$ per stage). Times are milliseconds except the final ratio row; total selector cost is well below 1\% of GPT-5 generation latency.}
  \label{tab:selector_latency}
  \normalsize
  \setlength{\tabcolsep}{3pt}
  \begin{tabularx}{\columnwidth}{@{}Xrrr@{}}
    \toprule
    \textbf{Stage} & \textbf{Mean} & \textbf{p95} & \textbf{Share} \\
    \midrule
    Metadata extraction &  0.020 &  0.026 & $<0.1\%$ \\
    CodeBERT embedding & 39.361 & 50.277 &  99.3\% \\
    LinUCB scoring &  0.291 &  0.350 &   0.7\% \\
    \midrule
    \textbf{Total selector} & \textbf{39.65} & \textbf{50.63} & 100.0\% \\
    GPT-5 generation & $\approx$ 5{,}000 & --- & --- \\
    \textbf{Selector/generation} & \textbf{0.79\%} & \textbf{1.01\%} & --- \\
    \bottomrule
  \end{tabularx}
\end{table}

\subsection{Composability with model routing}

LangSelect routes the output language while holding the model fixed; cost-aware model routing routes the request to a cheaper model while holding the output language fixed. These controls act on different variables and can compose when their costs are measured under a compatible accounting boundary: a $1-r_\ell$ language-saving factor stacked with a $1-r_m$ model-saving factor gives a combined $1 - (1-r_\ell)(1-r_m)$ saving. With harness-proxy $r_\ell = 0.503$ and an illustrative $r_m = 0.4$ from RouteLLM-style routing, the combined saving is $\approx 0.702$. We do not run a joint experiment in this paper, so this calculation is an accounting observation rather than an empirical result.

\FloatBarrier

\section{Related Work Taxonomy}
\label{app:related_taxonomy}

Cost-aware LLM inference most often reduces the input side of the bill. LLMLingua compresses prompts before generation, and Chain-of-Draft encourages terse intermediate reasoning~\cite{jiang2023llmlingua,xu2025cod}. Model-routing work such as FrugalGPT, RouteLLM, LLMCascades, and EllieSQL changes which model is called~\cite{chen2023frugalgpt,ong2024routellm,dohan2022cascades,zhu2025elliesql}. LangSelect uses the same routing abstraction but changes the action space: the model, prompt family, and executor remain fixed while the target programming language changes.

Multilingual code benchmarks and models make this action meaningful. HumanEval, MBPP, APPS, BigCodeBench, and DS-1000 provide executable coding tasks~\cite{chen2021codex,austin2021mbpp,hendrycks2021apps,zhuo2024bigcodebench,lai2022ds1000}; HumanEval-X, MultiPL-E, and McEval extend evaluation across languages~\cite{zheng2023codegeex,cassano2023multipl,chai2024mceval}. CodeGen, StarCoder, Code Llama, CodeGeeX, and MoLE show that modern models can generate across language portfolios~\cite{nijkamp2022codegen,li2023starcoder,roziere2023codellama,zheng2023codegeex,deng2025mole}. LangSelect treats that language axis as an inference-time control variable rather than only an evaluation label.

Programming-language confusion and tokenization bias explain why the decision cannot be a static shortest-language rule. Models can drift toward Python even under explicit non-Python instructions~\cite{moumoula2025plc}, while BPE and SentencePiece segment languages unevenly~\cite{sennrich2016bpe,kudo2018sentencepiece,petrov2023tokenizer}. Comparative language studies further show that concision differs for semantic reasons as well as tokenizer artifacts~\cite{nanz2015comparative,bostrom2020bpe}. LangSelect combines this observation with contextual-bandit machinery~\cite{li2010linucb,zhou2020neural} and verification/fallback, making language choice explicit and empirically checked inside coding-agent workflows~\cite{yang2024sweagent,xia2024agentless,zhang2024autocoderover,bouzenia2024repairagent,murali2023codecompose,pandey2024copilot}.

\paragraph{Cost-control axes.}
\phantomsection\label{app:related_comparison_text}
The cost-optimization methods above act on different parts of the inference stack, so their savings should be read as source-scoped rather than directly comparable. LLMLingua reports 20--40\% input-token savings for verbose prompts by compressing the prompt before generation~\cite{jiang2023llmlingua}. FrugalGPT, RouteLLM, and LLMCascades route or cascade across models, with reported savings of 98\% cost, 50\% cost, and 65\% GPU respectively under their own task and accounting settings~\cite{chen2023frugalgpt,ong2024routellm,dohan2022cascades}. Chain-of-Draft changes the candidate or reasoning budget and reports more than 50\% billing reduction in its setting~\cite{xu2025cod}. LangSelect is complementary to these methods: it keeps the model and prompt family fixed, changes the output programming language, and reports +50.3\% live harness-proxy savings with GPT-5 and 42.1\% verified replay headroom on language-flexible tasks.

\section{Extended Results}
\label{app:extended}

Figure~\ref{fig:regret} expands the convergence diagnostic into a full adaptive-regret comparison from the local experiment trace. This figure is a mechanism check: it shows whether the adaptive policy learns from verified feedback in a small trace. The paper's main empirical claims remain the held-out replay and live targeted-generation results in Section~\ref{sec:results}.

\begin{figure}[H]
  \centering
  \begin{tikzpicture}
    \begin{axis}[
      width=0.92\linewidth, height=5.0cm,
      xlabel={Number of Tasks $N$},
      ylabel={Cumulative Regret $R(N)$},
      xlabel style={yshift=2pt},
      ylabel style={xshift=2pt},
      xmin=0, xmax=330,
      ymin=0, ymax=70,
      xtick={0,50,100,150,200},
      ytick={0,20,40,60},
      ymajorgrids=true,
      xmajorgrids=true,
      grid style={dashed,gray!30},
    ]
      \addplot[thick,langblue] coordinates {
        (0,0)(10,0.89)(20,2.07)(30,2.18)(40,2.18)(50,2.75)(60,3.69)(70,4.87)(80,4.87)(90,4.87)(100,5.68)(110,5.73)(120,5.83)(130,5.9)(140,7.25)(150,8.35)(160,8.35)(170,9.35)(180,9.35)(190,9.56)(200,9.61)(204,9.61)
      };
      \addplot[thick,langorange] coordinates {
        (0,0)(10,1.92)(20,3.17)(30,3.79)(40,4.87)(50,6.91)(60,8.84)(70,10.87)(80,14.61)(90,17.87)(100,19.87)(110,21.88)(120,23.61)(130,26.51)(140,30.17)(150,33.17)(160,34.23)(170,34.26)(180,38.14)(190,40.35)(200,45.62)(204,45.62)
      };
      \addplot[thick,langgreen,dashed] coordinates {
        (0,0)(10,2.03)(20,2.23)(30,3.17)(40,5.21)(50,6.52)(60,7.46)(70,11.48)(80,16.03)(90,18.04)(100,20.88)(110,22.0)(120,22.09)(130,26.28)(140,27.36)(150,29.36)(160,32.15)(170,34.21)(180,37.3)(190,41.43)(200,45.5)(204,45.5)
      };
      \addplot[thick,langgray,dotted] coordinates {
        (0,0)(10,4.05)(20,6.01)(30,7.47)(40,9.47)(50,10.56)(60,12.76)(70,16.27)(80,23.65)(90,27.82)(100,30.87)(110,33.92)(120,36.92)(130,40.04)(140,43.12)(150,50.24)(160,53.33)(170,54.38)(180,58.3)(190,59.46)(200,64.27)(204,64.31)
      };
      \node[anchor=west, text=langblue] at (axis cs:207,9.61) {LangSelect};
      \node[anchor=west, text=langorange] at (axis cs:207,48.0) {LinUCB};
      \node[anchor=west, text=langgreen] at (axis cs:207,42.8) {$\varepsilon$-greedy};
      \node[anchor=west, text=langgray] at (axis cs:207,64.31) {Random};
    \end{axis}
  \end{tikzpicture}
  \caption{Cumulative regret diagnostic from the adaptive trace.}
  \label{fig:regret}
\end{figure}

Figure~\ref{fig:regret} shows that Neural-LinUCB accumulates less regret than the adaptive LinUCB, $\varepsilon$-greedy, and random diagnostics in the 204-task trace. This supports the mechanism story, but the trace is too small for headline claims; those claims rely on held-out replay and live targeted generation.

Tables~\ref{tab:live_eval_main} and~\ref{tab:live_eval_cost} of the main text report the corresponding live-evaluation deployable savings under harness-proxy accounting ($n{=}450$). Any future cross-model claim should be regenerated from model-specific traces with the same provenance, harness, and token-count accounting used for the main results.

\section{Reproducibility Checklist}
\label{app:repro}

This checklist names the run artifacts and boundaries needed to reproduce the reported comparisons. It is intentionally narrower than a release checklist for production deployment: the paper evaluates language routing under fixed prompts, fixed harnesses, and fixed accounting rules.

\begin{itemize}
  \item \textbf{Dataset}: verified MultiLang-Bench corpus, 3{,}000 tasks, 8 languages, and a 2{,}100/450/450 train/validation/test split.
  \item \textbf{Language portfolio}: $\{$C, Clojure, Go, Java, Julia, Python, Ruby, Rust$\}$ for both replay and live targeted generation.
  \item \textbf{Generation model}: GPT-5 via OpenAI direct API for all live targeted-generation rows. Tokens counted with GPT-5's \texttt{tiktoken cl100k\_base} encoder.
  \item \textbf{Replay evidence}: replay rows choose among verified-corpus solutions and have $100\%$ pass@1 by construction; all live attempts are charged, including failed and fallback retries.
  \item \textbf{Feature models}: CodeBERT is frozen for embedding-based selectors; deployable metadata includes task metadata available before generation. Hindsight-derived or token-spread features are not used in main-policy rows.
  \item \textbf{Randomness}: split assignment, bootstrap resampling, and selector training use fixed seeds recorded in the run manifests. The manuscript reports only generated reports whose manifests match the released dataset hash.
  \item \textbf{Hyperparameters}: the main LinUCB-style selectors use a 256-dimensional task representation, validation-tuned confidence thresholds for the hybrid policy, and a maximum fallback budget of three attempts when recovery is evaluated.
  \item \textbf{Execution boundary}: all accepted solutions run under fixed local harnesses with network access disabled; harnesses, tests, wrappers, harness-supplied imports, comments, docstrings, and uniformly supplied signatures are excluded from function-body token accounting.
\end{itemize}

\end{document}